\newcommand*{\ISARXIV}{}%
\newcommand{\myparagraph}[1]{\noindent\textbf{#1.}}
\def\0{\textbf{0}}
\def\1{\textbf{1}}
\def\a{\boldsymbol{a}}
\def\d{\boldsymbol{d}}
\def\e{\boldsymbol{e}}
\def\h{\boldsymbol{h}}
\def\v{\boldsymbol{v}}
\def\r{\boldsymbol{r}}
\def\s{\boldsymbol{s}}
\def\u{\boldsymbol{u}}
\def\v{\boldsymbol{v}}
\def\w{\boldsymbol{w}}
\def\x{\boldsymbol{x}}
\def\y{\boldsymbol{y}}
\def\z{\boldsymbol{z}}
\def\A{\boldsymbol{A}}
\def\J{\boldsymbol{J}}
\def\U{\boldsymbol{U}}
\def\V{\boldsymbol{V}}
\DeclareMathOperator{\diag}{diag}
\newcommand{\RR}{I\!\!R} 
\renewcommand{\mathbf}{\boldsymbol}
\newcommand{\mb}{\mathbf}
\newcommand{\mc}{\mathcal}
\newcommand{\bb}{\mathbb}
\newcommand{\paren}[1]{\left( #1 \right)}
\renewcommand{\Re}{{\mathbb R}}
\newcommand{\wh}{\widehat}
\newcommand{\norm}[2]{\left\| #1 \right\|_{#2}}
\newcommand{\abs}[1]{\left| #1 \right|}
\newcommand{\innerprod}[2]{\left\langle #1,  #2 \right\rangle}
\theoremstyle{plain}
\newtheorem{theorem}{Theorem}[section]
\newtheorem{proposition}[theorem]{Proposition}
\newtheorem{lemma}[theorem]{Lemma}
\theoremstyle{definition}
\newtheorem{definition}[theorem]{Definition}
\theoremstyle{remark}
\title{Robust Training under Label Noise by Over-parameterization}
\author[$\sharp$]{Sheng Liu}
\author[$\natural$]{Zhihui Zhu}
\author[$\S$]{Qing Qu}
\author[$\dagger$]{Chong You}
\affil[$\sharp$]{Center for Data Science, New York University}
\affil[$\natural$]{Electrical and Computer Engineering, University of Denver}
\affil[$\S$]{Department of EECS, University of Michigan}
\affil[$\dagger$]{Google Research, New York City}
\begin{document}

\maketitle

\begin{abstract}
Recently, over-parameterized deep networks, with increasingly more network parameters than training samples, have dominated the performances of modern machine learning. However, when the training data is corrupted, it has been well-known that over-parameterized networks tend to overfit and do not generalize. 
In this work, we propose a principled approach for robust training of over-parameterized deep networks in classification tasks where a proportion of training labels are corrupted. 
The main idea is yet very simple: label noise is sparse and incoherent with
the network learned from clean data, so we model the noise and learn to separate it from the data. 
Specifically, we model the label noise via another \emph{sparse over-parameterization} term, and exploit implicit algorithmic regularizations to recover and separate the underlying corruptions. Remarkably, when trained using such a simple method in practice, we demonstrate state-of-the-art test accuracy against label noise on a variety of real datasets. 
Furthermore, our experimental results are corroborated by theory on simplified linear models, showing that exact separation between sparse noise and low-rank data can be achieved under incoherent conditions. 
The work opens many interesting directions for improving over-parameterized models by using sparse over-parameterization and implicit regularization\footnote{Code is available at: \url{https://github.com/shengliu66/SOP.}}. 
\end{abstract}

\section{Introduction}

One of the most important factors for the success of deep models is their large model size and high expressive power, which enable them to learn complicated input-output relations. 
As such, over-parametrized deep networks or large models, with more parameters than the size of training data, have dominated the performance in computer vision, natural language processing, and so on. 
The adoption of large models is justified by the recent discovery that deep models exhibit a ``double descent'' \cite{belkin2019reconciling} and ``uni-modal variance'' \cite{yang2020rethinking} generalization behavior, where their performance continues to improve beyond the interpolation point, extending the classical learning theory of bias-variance trade-off. 
While there are infinitely many global solutions that \emph{overfit} to training data, the choice of optimization algorithm imposes certain \emph{implicit} regularization \cite{neyshabur2014search} so that over-parameterized models converge to those that are generalizable. 

Nonetheless, the success of over-parameterization of deep networks critically depends on the availability of \emph{clean} training data, while overfitting inevitably occurs when training data is corrupted. 
Consider the task of image classification with a training dataset $\{(\x_i, \y_i)\}_{i=1}^N$, with $\x_i$ being an input image and $\y_i$ being the corresponding one-hot label. With an over-parameterized deep network $f(\cdot; \, \mb \theta)$, model training is achieved by solving an optimization problem with respect to (w.r.t.) the network parameter $\mb \theta$ as follows:
 \setlength{\belowdisplayskip}{3pt} \setlength{\belowdisplayshortskip}{3pt}
 \setlength{\abovedisplayskip}{3pt} \setlength{\abovedisplayshortskip}{3pt}
\begin{equation}\label{eq:model}
    \min_{\mb \theta}\; L(\mb \theta) \;=\; \frac{1}{N} \sum_{i=1}^N \ell\Big(f(\x_i; \, \mb \theta), \;\y_i\Big),
\end{equation}
where $\ell(\cdot, \cdot)$ is a loss function that measures the distance between network prediction $f(\x_i; \, \mb \theta)$ and the label $\y_i$. 
If a proportion of the images in the training set is \emph{mislabelled} \cite{song2020learning}, it is well-known that the network will be optimized to zero training error hence produce $f(\x_i; \mb \theta) \approx \y_i$ for all $i \in \{1,\cdots,N\}$, even for $\y_i$'s that are incorrect \cite{zhang2021understanding}.
Overfitting to wrong labels inevitably leads to poor generalization performance (see Fig.~\ref{fig:SOP_CE_ACC}).

\begin{wrapfigure}{r}{0.6\textwidth}
\centering  
\ifdefined\ISARXIV
\includegraphics[width=0.99\linewidth,trim={0.35cm 0 0.35cm 0},clip]{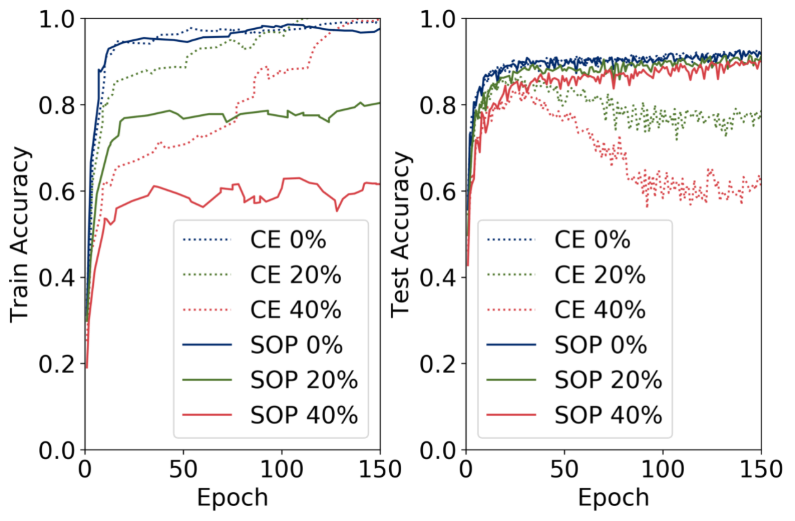}
\else
\includegraphics[width=0.99\linewidth,trim={0.35cm 0 0.35cm 0},clip]{figs/Overparam_Acc.png}
\fi
\vspace{-8pt}
\caption{\textbf{Sparse over-parameterization prevents overfitting to label noise.}
Training and test accuracy of a PreActResNet18 network trained with a standard cross entropy (CE) loss (dashed lines) and our Sparse Over-parameterization (SOP) (solid lines) for image classification on the CIFAR-10 dataset with 0\%, 20\%, and 40\% of the labels flipped at random. SOP prevents overfitting to the wrong training labels, obtaining near 100\%, 80\%, 60\% training accuracy respectively, therefore achieves better generalization on the test set without an accuracy drop at the end of training.}
\label{fig:SOP_CE_ACC}
\end{wrapfigure}
In this paper, we introduce a principled method to address the challenges of overfitting over-parameterized deep networks in the presence of training data corruptions. We focus on the task of classification trained with noisy label, a ubiquitous problem in practice due to the extreme complexity of data annotation even for experienced domain experts \cite{frenay2013classification}.
Our idea leverages the property that the label noise is \emph{sparse}, namely only a fraction of the labels are corrupted and the rest are intact.
Principled methods for dealing with sparse corruption have a rich history, which can be retraced back to compressed sensing \cite{candes2005decoding}, robust subspace recovery \cite{candes2011robust,wright2008robust}, and even earlier \cite{claerbout1973robust}.
Such methods are based on using a robust loss function, such as the $\ell_1$ norm which is less sensitive to large outlying entries.
While it is tempting to use sparse modeling for the label noise problem by setting the loss $\ell()$ in \eqref{eq:model} as the $\ell_1$ loss, such an approach cannot solve the overfitting issue since all global solutions are still given by those that satisfy $f(\x_i; \mb \theta) \approx \y_i$ for all $i \in \{1,\cdots,N\}$.
Hence, handling sparse corruptions with over-parameterized models requires the development of techniques beyond the classical $\ell_1$ loss for sparse modeling. 

\paragraph{Overview of our method and contribution.}
To handle sparse corruption with over-parameterized models, our idea is simply to use an extra variable $\mb s_i$ to model the unknown label noise $\mb s_{\star i}$, which is the difference between the observed label $\y_i$ and the corresponding clean label. 
Hence, the goal is to minimize the discrepancy between $f(\x_i;\mb \theta) + \s_i$ and $\y_i$. 
Inspired by a line of recent work \cite{vaskevicius2019implicit,zhao2019implicit,you2020robust}, we enforce sparsity of $\mb s_i$ by the over-parameterization $\mb s_i = \u_i \odot \u_i - \v_i \odot \v_i $ and optimize the following training loss
\begin{equation}\label{eq:main_model} 
    \min_{\mb \theta, \{\u_i, \v_i\}_{i=1}^N} L\left(\mb \theta, \{\u_i, \v_i\}_{k=1}^N\right), ~~\text{where}~~ L\left(\mb \theta, \{\u_i, \v_i\}_{k=1}^N\right)) \doteq \frac{1}{N} \sum_{i=1}^N \ell \left(f(\x_i;\mb \theta)  + \u_i \odot \u_i - \v_i \odot \v_i,\y_i \right)),
\end{equation}
with $\odot$ denoting an entry-wise Hadamard product. 
We term our method ``Sparse Over-Parameterization'' (SOP).
At the first glance, our SOP approach is seemingly problematic, because adding more learnable parameters $\{\u_i, \v_i\}_{i=1}^N$ to an over-parameterized network $f(\cdot, \mb \theta)$ would aggravate rather than alleviate the overfitting issue. 
Indeed, a global solution to \eqref{eq:main_model} is given by $\u_i \equiv \v_i \equiv \0$ and $f(\x_i, \mb \theta) \equiv \y_i$ for all $i \in \{1,\cdots,N\}$ where the network overfits to noisy labels. 
Here, we leverage the choice of a particular training algorithm to enforce an \emph{implicit bias} towards producing the desired solutions. 
Technically, we run gradient descent on the objective in \eqref{eq:main_model} starting from a small initialization for $\{\u_i, \v_i\}_{i=1}^N$:
\begin{equation}\label{eq:main_algorithm}
\begin{split}
    \mb \theta &\leftarrow \mb \theta - \tau \cdot \frac{\partial L(\mb \theta, \{\u_i, \v_i\})}{\partial \mb \theta},\\
    \u_i &\leftarrow \u_i - \alpha \tau \cdot \frac{\partial L(\mb \theta, \{\u_i, \v_i\})}{\partial \u_i},~~i=1, \ldots, N,\\
    \v_i &\leftarrow \v_i - \alpha \tau \cdot\frac{\partial L(\mb \theta, \{\u_i, \v_i\})}{\partial \v_i},~~i=1, \ldots, N,\\
\end{split}
\end{equation}
where $\alpha>0$ is the ratio of learning rates for different training variables.
Such a simple algorithm enables our method of SOP to train a deep image classification networks without overfitting to wrong labels and obtain better generalization performance (see Fig.~\ref{fig:SOP_CE_ACC}). 
A more comprehensive empirical study with a variety of datasets is presented in \Cref{sec:experiments}.




To rigorously justify our method, we theoretically investigate our method based upon a simplified over-parameterized linear model with sparse corruptions. As justified by a line of recent work \cite{jacot2018neural,chizat2018lazy}, over-parameterized linear models capture similar phenomena because they well approximate over-parameterized deep networks in a linearized regime around the initial points. Under sparse corruption and certain low-rank assumptions on the data, we show that the gradient descent \eqref{eq:main_algorithm} with an $\alpha$ below a certain threshold recovers the underlying model parameters with sparse corruptions. Our result is obtained by explicitly characterizing the implicit regularization for the term $\u_i \odot \u_i - \v_i \odot \v_i$. In particular, we explicitly show that it leads to an $\ell_1$-norm regularization on the sparse corruption, hence connecting our method to classical $\ell_1$ loss approaches for model robustness. For more details, we refer readers to \Cref{sec:theory}. 

In summary, our contributions are two-folds:
\ifdefined\ISARXIV
\begin{itemize}[topsep=0.3em,itemsep=0.1em]
\else
\begin{itemize}[leftmargin=*,topsep=0.0em,noitemsep]
\fi
    \item \emph{Method.} We proposed a simple yet practical SOP method that can effectively prevent overfitting for learning over-parameterized deep networks from corrupted training data, demonstrated on a variety of datasets.
    \item \emph{Theory.} Under a simplified over-parameterized linear model, we rigorously justify our approach for exactly separating sparse corruption from the data.
\end{itemize}
\ifdefined\ISARXIV
\else
\fi
Moreover, we believe the methodology we developed here could be far beyond the label noise setting, with the potential for dealing with more challenging scenarios of preventing overfitting in learning modern over-parametrized models of an ever-increasing size.

\label{sec:intro}


\section{Robust Classification with Label Noise}\label{sec:experiments}

In this section, we show how our SOP method plays out on image classification problems with the noisy label. In particular, we discuss extra implementation details of our method, followed by experimental demonstrations on a variety of datasets with synthetic and real label noise.

\subsection{Implementation Details of SOP}\label{subsec:implementation}
We train an over-parameterized deep neural network $f(\cdot; \mb \theta)$ from the noisy training data $\{(\x_i, \y_i)\}_{i=1}^N$ using the method briefly discussed in \Cref{sec:intro}.
Specifically, we train the network $f(\cdot; \mb \theta)$ using the objective \eqref{eq:main_model} with stochastic gradient descent (SGD) (i.e. a batch version of \eqref{eq:main_algorithm}).

Notice that there is additional prior information on label noise $\s_{\star i}$ associated with a sample $\{\x_i, \y_i\}$, namely, the positive and negative entries of $\s_{\star i}$ must correspond to nonzero entry and zero entries of $\y_i$, respectively.
Moreover, all entries of $\s_{\star i}$ must lie in the range of $[-1, 1]$.
To leverage such information, we optimize a variant of \eqref{eq:main_model} given by
\begin{gather}\label{eq:image_classification_model} 
    \min_{\mb \theta, \{\u_i, \v_i\}_{i=1}^N} \frac{1}{N} \sum_{i=1}^N \ell \Big( f(\x_i; \,\mb \theta)  +  \s_i, \,\y_i\Big), \\
    \text{s.t.}~~
    \s_i \doteq \u_i \odot \u_i \odot \y_i - \v_i \odot \v_i \odot (1-\y_i), ~~\text{and}~~ 
    \ifdefined\ISARXIV
    \else
    \\
    \fi
    \u_i \in [-1, 1]^K, ~~\v_i \in [-1, 1]^K,
\end{gather}
where $K$ is the number of classes. 
In above, constraints on $\u_i, \v_i$ are realized by performing a projection step after each gradient descent update.

\paragraph{Choice of the loss function $\ell(\cdot,\cdot)$ in \eqref{eq:image_classification_model}. }
The most commonly used loss function for classification tasks is the cross-entropy loss $\ell_{\mathrm{CE}}(\cdot,\cdot)$ \cite{krizhevsky2012imagenet}. 
Because the $\ell_{\mathrm{CE}}(\cdot,\cdot)$ loss requires a probability distribution as an input, we define a mapping
\begin{equation}
    \phi(\w) \doteq \frac{\max\{\w, \epsilon \1\}}{\|\max\{\w, \epsilon \1\}\|_1},
\end{equation}
and set the loss $\ell(\cdot,\cdot)$ in \eqref{eq:image_classification_model} to be
\ifdefined\ISARXIV
\begin{equation}\label{eq:ce-loss}
\ell \Big( f(\x_i; \,\mb \theta)  +  \s_i, \,\y_i\Big) = \ell_{\mathrm{CE}} \Big( \phi\big(f(\x_i; \,\mb \theta)  +  \s_i\big), \,\y_i\Big)
\end{equation}
\else
\begin{equation}\label{eq:ce-loss}
L_{\mathrm{CE}} \big(\mb \theta, \u_i, \v_i; \x_i, \y_i \big) \doteq \ell_{\mathrm{CE}} \Big( \phi\big(f(\x_i; \,\mb \theta)  +  \s_i\big), \,\y_i\Big).
\end{equation}
\fi

On the other hand, the cross-entropy loss cannot be used to optimize the variables $\{\v_i\}$ (see~\Cref{app:choice-of-loss} for an explanation). 
Hence, we use the mean squared error loss $\ell_{\mathrm{MSE}}$ and set the loss in \eqref{eq:image_classification_model} to be
\ifdefined\ISARXIV
\begin{equation}\label{eq:mse-loss}
    \ell \Big( f(\x_i; \,\mb \theta)  +  \s_i, \,\y_i\Big) = \ell_{\mathrm{MSE}} \Big( f(\x_i; \,\mb \theta)  +  \s_i, \,\y_i\Big), 
\end{equation}
\else
\begin{equation}\label{eq:mse-loss}
    L_{\mathrm{MSE}}\big( \mb \theta, \u_i, \v_i; \x_i, \y_i\big) \doteq \ell_{\mathrm{MSE}} \Big( f(\x_i; \,\mb \theta)  +  \s_i, \,\y_i\Big), 
\end{equation}
\fi
when optimizing $\{\v_i\}$\footnote{We also project $f(\x_i; \,\mb \theta)$ to a one-hot vector when using MSE loss which is empirically found to accelerate convergence of $\{\v_i\}$.}. 
We summarize our training method in \Cref{alg:algorithm}. 

\begin{algorithm}[t]
	\caption{Image classification under label noise by the method of Sparse Over-Parameterization (SOP).}
	\label{alg:algorithm}
	\begin{algorithmic}[1]
		\STATE \textbf{Input:} Training data $\{(\x_i, \y_i)\}_{i=1}^N$, network backbone $f(\cdot, \mb \theta)$, variables $\{\u_i, \v_i\}_{i=1}^N$, number of epochs $T$, learning rate $\tau$, learning rate ratio $\alpha_{u}, \alpha_{v}$, batch size $\beta$
		\STATE \textbf{Initialization:} Draw entries of $\u_i, \v_i$ from i.i.d. Gaussian distribution with zero-mean and s.t.d. $1e-8$
		\FOR {each $t \in \{1, \cdots, T\}$}
		    \STATE {\texttt{\# Train network $f(\cdot, \mb \theta)$ with SGD}}
		    \FOR {each $b \in \{1, \cdots, N / \beta\}$}
		        \STATE Sample a batch $\mathcal{B} \subseteq \{1, \ldots, N\}$ with $|\mathcal{B}|=\beta$
		       \ifdefined\ISARXIV
		       \STATE Set $\mb \theta \leftarrow \mb \theta - \tau \cdot \sum_{i \in \mathcal{B}}\frac{\partial \ell_{\mathrm{CE}} \big( \phi(f(\x_i; \,\mb \theta)  +  \s_i), \,\y_i\big)}{\partial \mb \theta}$		       
		       \else
		       \STATE Set $\mb \theta \leftarrow \mb \theta - \tau \cdot \sum_{i \in \mathcal{B}}\frac{\partial L_{\mathrm{CE} }(\mb \theta, \u_i, \v_i; \x_i, \y_i)}{\partial \mb \theta}$
		       \fi
		    \ENDFOR		
		    \STATE {\texttt{\# Update $\{\u_i, \v_i\}$}}		    
		    \FOR {each $i \in \{1, \cdots, N\}$}
		  \ifdefined\ISARXIV
		    \STATE Set $\s_i \leftarrow \u_i \odot \u_i \odot \y_i - \v_i \odot \v_i \odot (1-\y_i)$
		    \STATE Set $\u_i \leftarrow \mc P_{[-1,1]}\paren{ \u_i - \alpha_{u}\tau \cdot \frac{\partial \ell_{\mathrm{CE}} \big( \phi(f(\x_i; \,\mb \theta)  +  \s_i), \,\y_i\big)}{\partial \u_i} }$
		    \STATE Set $\v_i \leftarrow \mc P_{[-1,1]}\paren{\v_i - \alpha_{v}\tau \cdot \frac{\partial \ell_{\mathrm{MSE}} \big( f(\x_i; \,\mb \theta)  +  \s_i, \,\y_i\big)}{\partial \v_i}}$			  
		  \else
		    \STATE Set $\u_i \leftarrow \mc P_{[-1,1]}\paren{ \u_i - \alpha_{u}\tau \frac{\partial L_{\mathrm{CE} }(\mb \theta, \u_i, \v_i; \x_i, \y_i)}{\partial \u_i} }$
		    \STATE Set $\v_i \leftarrow \mc P_{[-1,1]}\paren{\v_i - \alpha_{v}\tau \frac{\partial L_{\mathrm{MSE} }(\mb \theta, \u_i, \v_i; \x_i, \y_i)}{\partial \v_i}}$	
		  \fi
		    \ENDFOR
		\ENDFOR
		\STATE \textbf{Output:} Network parameters $\mb \theta$ and $\{\u_i, \v_i\}_{i=1}^N$
		\end{algorithmic}
\end{algorithm}

\subsection{Experiments}\label{subsec:exp}

We experimentally demonstrate the effectiveness of our proposed SOP method on datasets with both synthetic (i.e., CIFAR-10 and CIFAR-100) and realistic (i.e., CIFAR-N, Clothing-1M, and WebVision) label noise.
In addition to the SOP described in~\Cref{alg:algorithm}, we also implement an improved version, termed SOP+, which incorporates two commonly used regularization techniques in the literature of label noise, namely the consistency regularization and the class-balance regularization.
We explain SOP+ in more detail in Appendix~\ref{app:sop+}.

\paragraph{Dataset descriptions.}
We use datasets with synthetic label noise generated from CIFAR-10 and CIFAR-100~\cite{krizhevsky2009learning}.
Each dataset contains 50k training images and 10k test images, all with clean labels, where each image is of size $32 \times 32$. Following previous works~\cite{NEURIPS2018_a19744e2, liu2020early, xia2020robust}, we generate symmetric label noise by uniformly flipping labels for a percentage of the training set for all classes, as well as asymmetric label noise by flipping labels for particular pairs of classes. 
For datasets with realistic label noise, we test on CIFAR-10N/CIFAR-100N \cite{wei2021learning} which contains a re-annotation of CIFAR-10/CIFAR-100 with human workers. 
Specifically, each image in CIFAR-10N contains three submitted labels (i.e., \textbf{Random {1, 2, 3}}) which are further combined to have an \textbf{Aggregate} and a \textbf{Worst} label. Each image in CIFAR-100N contains a single submitted label for the fine classes. 
We also test on Clothing-1M~\cite{xiao2015learning} which is a large-scale dataset with images clawed from online shopping websites and labels generated based on surrounding texts.  
Clothing-1M contains 1 million training images, 15k validation images, and 10k test images with clean labels. 
Finally, we also test on the mini WebVision dataset~\cite{li2017webvision} which contains the top 50 classes from the Google image subset of WebVision (approximately 66 thousand images). 
Models trained on mini WebVision are evaluated on both WebVision and ImageNet ILSVRC12 validation set.
Details on the label noise for these datasets is provided in~\Cref{app:noise-definition}.

\paragraph{Network structures \& hyperparameters.}  
We implement our method with PyTorch v1.7. For each dataset, the choices of network architectures and hyperparameters for SOP are as follows. Additional details, as well as hyper-parameters for both SOP and SOP+, can be found in Appendix~\ref{app:experiment}.

\ifdefined\ISARXIV
\begin{itemize}[topsep=0.3em,itemsep=0.1em]
\else
\begin{itemize}[leftmargin=*,topsep=0.0em,noitemsep]
\fi
    \item \emph{CIFAR-10/100 and CIFAR-10N/100N}. We follow~\cite{liu2020early} to use ResNet-34 and PreActResNet18 architectures trained with SGD using a 0.9 momentum. The initial learning rate is $0.02$ decayed with a factor of 10 at the 40th and 80th epochs for CIFAR-10/CIFAR-10N and at the 80th and 120th epochs for CIFAR-100/CIFAR-100N, respectively. Weight decay for network parameters $\mb \theta$ is set to $5\times 10^{-4}$. 
    No weight decay is used for parameters $\{\u_i, \v_i\}_{i=1}^N$.
    \item \emph{Clothing-1M}. We follow the previous work~\cite{liu2020early} to use a ResNet-50~\cite{he2016deep} pre-trained on ImageNet~\cite{krizhevsky2012imagenet}. The network is trained with batch size 64 and an initial learning rate 0.001, which is reduced by a factor of $10$ after 5th epoch (10 training epochs in total). Optimization is performed using SGD with a momentum 0.9. 
    Weight decay is 0.001 for parameters $\mb \theta$ and is zero for parameters $\{\u_i, \v_i\}_{i=1}^N$.
    \item \emph{Mini Webvision}. We use InceptionResNetV2 as the backbone architecture. All other optimization details are the same as for CIFAR-10, except that we use weight decay $0.0005$ and batch size $32$.
\end{itemize}

\begin{table*}[t]
\centering
\caption{\textbf{Test accuracy with synthetic label noise} on CIFAR-10 and CIFAR-100 with $\{20\%, 40\%, 60\%, 80\%\}$ percent of labels for training data randomly flipped uniformly to another class. All methods use ResNet34 as the architecture. Mean and standard deviation over 5 independent runs are reported. }
\resizebox{\textwidth}{!}{%
\begin{tabular}{l|cccc|cccc}
\toprule
\multirow{2}{*}{Methods}  & \multicolumn{4}{c}{CIFAR-10} & \multicolumn{4}{|c}{CIFAR-100} \\
 & 20\% & 40\% & 60\% & 80\% & 20\% & 40\% & 60\% & 80\%\\
\midrule
CE & $86.32_{\pm 0.18}$  & $82.65_{\pm 0.16} $ & $76.15_{\pm 0.32} $ & $59.28_{\pm 0.97}$ & $51.43_{\pm 0.58}$ & $45.23_{\pm 0.53}$ & $36.31_{\pm 0.39} $ & $20.23_{\pm 0.82}$ \\
Forward & $87.99_{\pm 0.36}$ & $83.25_{\pm 0.38}$ & $74.96_{\pm 0.65}$ & $54.64_{\pm 0.44}$ & $39.19_{\pm 2.61}$ & $31.05_{\pm 1.44}$ & $19.12_{\pm 1.95}$ & $8.99_{\pm 0.58}$ \\
GCE   & $89.83_{\pm 0.20}$ & $87.13_{\pm 0.22}$ & $82.54_{\pm 0.23}$ & $64.07_{\pm 1.38}$ & $66.81_{\pm 0.42}$ & $61.77_{\pm 0.24}$ & $53.16_{\pm 0.78}$ & $29.16_{\pm 0.74}$ \\
SL & $89.83_{\pm 0.32}$ & $87.13_{\pm 0.26}$ & $82.81_{\pm 0.61}$  & $68.12_{\pm 0.81}$ & $70.38_{\pm 0.13}$ & $62.27_{\pm 0.22}$ & $54.82_{\pm 0.57}$ & $25.91_{\pm 0.44}$ \\
ELR& $91.16_{\pm 0.08}$ & $89.15_{\pm 0.17}$ & $86.12_{\pm 0.49}$ & $\textbf{73.86}_{\pm 0.61}$ & $74.21_{\pm 0.22}$ &  $68.28_{\pm 0.31}$ & $59.28_{\pm 0.67}$ & $29.78_{\pm 0.56}$ \\
\midrule
SOP (ours) & $\textbf{93.18}_{\pm 0.57}$ & $\textbf{90.09}_{\pm 0.27}$ & $\textbf{86.76}_{\pm 0.22}$ & $68.32_{\pm 0.77}$ & $\textbf{74.67}_{\pm 0.30}$ & $\textbf{70.12}_{\pm 0.57}$ & $\textbf{60.26}_{\pm 0.41}$ & $\textbf{30.20}_{\pm 0.63}$ \\
\bottomrule
\end{tabular}
}
\label{tab:cifar}
\end{table*}



\begin{table}[t]
\centering
\caption{\textbf{Comparison with the state-of-the-art methods} that use two network ensembles and semi-supervised learning on CIFAR-10 and CIFAR-100 under symmetric (with ${20\%, 50\%, 80\%}$) and asymmetric (with $40\%$) label noise. All methods use ResNet34 as the architecture. }
\ifdefined\ISARXIV
\begin{tabular}{l|ccc|c|ccc|c}
\toprule
\multirow{3}{*}{Methods}  & \multicolumn{4}{c}{CIFAR-10} & \multicolumn{4}{|c}{CIFAR-100} \\
 & \multicolumn{3}{c}{Symmetric} & Asym &\multicolumn{3}{c}{Symmetric} & Asym \\
\else
\resizebox{\columnwidth}{!}{%
\begin{tabular}{@{\;}l@{\;}|@{\;}c@{\;\;}c@{\;\;}c@{\;}|@{\;}c@{\;}|@{\;}c@{\;\;}c@{\;\;}c@{\;}|@{\;}c@{\;}}
\toprule
 & \multicolumn{4}{c}{CIFAR-10} & \multicolumn{4}{@{\negthickspace}|c}{CIFAR-100} \\
 \midrule
 & \multicolumn{3}{c}{Symmetric} & Asym &\multicolumn{3}{@{\;}c}{Symmetric} & Asym \\
\fi
     & 20\% & 50\% & 80\%  & 40\% & 20\% & 50\% & 80\%  & 40\% \\
\midrule
CE & $87.2$  & $80.7$ & $65.8 $ & $82.2$  & $ 58.1$ & $47.1$ & $23.8$ & $43.3$\\
MixUp & $93.5$ & $87.9$ & $72.3$ & - & $69.9$ & $57.3$ & $33.6$ & - \\
DivideMix & $96.1$ & $94.6$ & $93.2$ & $93.4$ & $77.1$ & $74.6$ & $60.2$ & $72.1$\\
ELR+ & $95.8$ & $94.8$ & $ 93.3$ & $93.0$ & $77.7$ & $ 73.8$ & $60.8$ & $77.5$ \\ 
\midrule
SOP+ (ours) & $\textbf{96.3}$ & $\textbf{95.5}$ & $\textbf{94.0}$  & $\textbf{93.8}$ & $\textbf{78.8}$ & $\textbf{75.9}$ & $\textbf{63.3}$ & $\textbf{78.0}$\\
\bottomrule
\end{tabular}
\ifdefined\ISARXIV
\else
}
\fi
\label{tab:cifar_with_regularization}
\end{table}


\paragraph{Experimental results.}
We compare with methods based on estimation of the transition matrix (Forward~\cite{patrini2017making}), design of loss functions
(GCE~\cite{zhang2018generalized} and SL~\cite{wang2019symmetric}), training two networks (Co-teaching~\cite{NEURIPS2018_a19744e2} and DivideMix~\cite{li2020dividemix}), and label noise correction (ELR~\cite{liu2020early} and CORES$^2$~\cite{cheng2021learning}).

\Cref{tab:cifar} reports the performance of our method on synthetically generated symmetric label noise using CIFAR-10 and CIFAR-100. 
To compare with state-of-the-art methods, we also report the performance of SOP+ which contains additional regularization on both symmetric and asymmetric label noise and report the results in Table~\ref{tab:cifar_with_regularization}. 
It can be observed that our method is robust to a fairly large amount of label noise, and compares favorably to existing techniques. 

We further demonstrate that our method can effectively handle datasets with realistic label noise by reporting its performance on Clothing1M \& WebVision (see Table~\ref{tab:clothing_and_webvision}) and CIFAR-N (see Table~\ref{tab:cifar-n}) datasets. 
We can observe a performance gain over all comparing methods. 


Finally, we compare the training time (on a single Nvidia V100 GPU) of our method to the baseline methods in~\Cref{tab:training_time}. 
We observe that our algorithm SOP/SOP+ achieves the fastest speed across all baselines.



\begin{table}[t]
\centering
\caption{\textbf{Test accuracy with realistic label noise on Clothing1M and WebVision.} We use a pre-trained ResNet50 for Clothing1M and an InceptionResNetV2 for WebVision dataset. The results of the comparing methods are taken from their respective papers.}
\begin{tabular}{l|c|c|c}
\toprule
      Methods & Clothing1M &WebVision & ILSVRC12 \\
      \midrule
      CE & $69.1$ & - & - \\
      Forward  & $69.8$ & $61.1$ & $57.3$\\
      Co-teaching & $69.2 $ & $63.6$ & 61.5\\
      ELR &  $72.9$ & $76.2$ & 68.7\\
      CORES$^2$ & $73.2$ & - & -\\
      \midrule
      SOP (ours) & $\mathbf{73.5}$ & $\mathbf{76.6}$ & $\mathbf{69.1}$\\
      \bottomrule
\end{tabular}
\label{tab:clothing_and_webvision}
\end{table}

\begin{table}[t]
\centering
\caption{\textbf{Test accuracy with realistic label noise on CIFAR-N.} Mean and standard deviation over 5 independent runs are reported. The results of the baseline methods are taken from \cite{wei2021learning} which all use ResNet34 as the architecture. For SOP+, we use PreActResNet18. }
\resizebox{\textwidth}{!}{%
\begin{tabular}{l|cccccc|cc}
\toprule
      \multirow{2}{*}{Methods} & \multicolumn{6}{c|}{CIFAR-10N} & \multicolumn{2}{c}{CIFAR-100N} \\
             & Clean & Random 1 & Random 2 & Random 3 & Aggregate & Worst & Clean & Noisy \\
      \midrule
      CE & $92.92_{\pm 0.11}$ & $85.02_{\pm 0.65}$ & $86.46_{\pm 1.79}$ & $85.16_{\pm 0.61}$ & $87.77_{\pm 0.38}$ & $77.69_{\pm 1.55}$ & $76.70_{\pm 0.74}$ & $55.50_{\pm 0.66}$\\
      Forward & $93.02_{\pm 0.12}$ & $86.88_{\pm 0.50}$ & $86.14_{\pm 0.24}$    & $87.04_{\pm 0.35}$ & $88.24_{\pm 0.22}$ & $79.79_{\pm 0.46}$ & $76.18_{\pm 0.37}$ & $57.01_{\pm 1.03}$    \\
      Co-teaching & $93.35_{\pm 0.14}$ & $90.33_{\pm 0.13}$ & $90.30_{\pm 0.17}$ & $90.15_{\pm 0.18}$ & $91.20_{\pm 0.13}$ & $83.83_{\pm 0.13}$ & $73.46_{\pm 0.09}$ & $60.37_{\pm 0.27}$\\ 
      ELR+ & $95.39_{\pm 0.05}$ & $94.43_{\pm 0.41}$ & $94.20_{\pm 0.24}$ & $94.34_{\pm 0.22}$ & $94.83_{\pm 0.10}$ & $91.09_{\pm 1.60}$ & $78.57_{\pm 0.12}$ & $66.72_{\pm 0.07}$ \\
      CORES$^*$ & $94.16_{\pm 0.11}$ & $94.45_{\pm 0.14}$ & $94.88_{\pm 0.31}$ & $94.74_{\pm 0.03}$ & $95.25_{\pm 0.09}$ & $91.66_{\pm 0.09}$ & $73.87_{\pm 0.16}$ & $55.72_{\pm 0.42}$\\
      \midrule
      SOP+(ours) & $\mathbf{96.38}_{\pm 0.31}$ & $\mathbf{95.28}_{\pm 0.13}$ & $\mathbf{95.31}_{\pm 0.10}$ & $\mathbf{95.39}_{\pm 0.11}$ & $\mathbf{95.61}_{\pm 0.13}$ & $\mathbf{93.24}_{\pm 0.21}$ & $\mathbf{78.91}_{\pm 0.43}$ & $\mathbf{67.81}_{\pm 0.23}$\\
      \bottomrule
\end{tabular}
}
\label{tab:cifar-n}
\end{table}


\begin{table}[t]
\ifdefined\ISARXIV
\else
\footnotesize
\fi
\centering
\caption{\textbf{Comparison of total training time} in hours on CIFAR-10 with 50\% symmetric label noise.}
\begin{tabular}{c|c|c|c|c|c}
\toprule
CE & Co-teaching+ & DivideMix & ELR+ & SOP (ours) & SOP+ (ours)\\
\midrule
0.9h & 4.4h & 5.4h & 2.3h & 1.0h & 2.1h \\
\bottomrule
\end{tabular}
\label{tab:training_time}
\ifdefined\ISARXIV
\else
\fi
\end{table}







\section{Theoretical Insights with Simplified Models}\label{sec:theory}
 This section provides theoretical insights into our SOP method by studying structured data recovery with sparse corruption in the context of over-parameterized \emph{linear} models. We will start with model simplification, followed by our main theoretical results and experimental verification.


\subsection{Problem Setup \& Main Result}
Given a highly \emph{overparameterized} network $f(\cdot; \mb \theta)$, recent work \cite{jacot2018neural,kalimeris2019sgd} suggests that 
the parameter $\mb \theta \in \bb R^p$ may not change much from its initialization $\mb \theta_0$ before obtaining zero training error.
Hence, a nonlinear network $f(\cdot; \mb \theta): \bb R^n \mapsto \bb R$ can be well approximated by its first-order Taylor expansion:
\begin{equation}
\begin{split}
    f(\x; \mb \theta) \approx f(\x; \mb \theta_0) + \langle \nabla_{\mb \theta} f(\x; \mb \theta_0), \;\mb \theta - \mb \theta_0\rangle,
\end{split}
\end{equation}
where we consider $f(\cdot; \mb \theta)$ as a scalar function for simplicity. Since the bias term $f(\x; \mb \theta_0) - \langle \nabla_{\mb \theta} f(\x; \mb \theta_0), \; \mb \theta_0\rangle$ is constant w.r.t. $\mb \theta$, for simplicity we may further assume that 
\begin{equation}
    f(\x; \mb \theta) \approx \langle \nabla_{\mb \theta} f(\x; \mb \theta_0), \;\mb \theta\rangle.
\end{equation}


Thus, for a dataset $\{\x_i\}_{i=1}^N$ of $N$ points, collectively 
\begin{equation}
\begin{split}
    \begin{bmatrix} f(\x_1; \mb \theta) \\ \vdots \\ f(\x_N; \mb \theta) \end{bmatrix}  
    \;&\approx\; 
    \begin{bmatrix} \nabla_{\mb \theta} f(\x_1; \mb \theta_0) \\ \vdots \\ \nabla_{\mb \theta} f(\x_N; \mb \theta_0) \end{bmatrix} \cdot  \mb \theta
    \;=\; \J \cdot \mb \theta,
\end{split}
\end{equation}
where $\J \in \RR^{N \times p}$ is a Jacobian matrix.
This observation motivates us to consider the following problem setup.

\paragraph{Problem setup.}
Based upon the above linearization, we assume that our corrupted observation $\mb y\in \bb R^N$ (e.g., noisy labels) is generated by
\begin{equation}\label{eq:model-linear}
    \y = \J \cdot \mb \theta_\star + \s_\star,
\end{equation}
where $\mb \theta_\star\in \bb R^p$ is the underlying groundtruth parameter, and the noise $\mb s_\star \in \bb R^N$ is \emph{sparse} so that only a subset of observation (e.g., labels) is corrupted. Given $\mb J$ and $\mb y$ generated from \eqref{eq:model-linear}, our goal is to recover both $\mb \theta_\star$ and $\mb s_\star$.


However, as we are considering the problem in an over-parameterized regime with $p > N$, the underdetermined system \eqref{eq:model-linear} implies that there are \emph{infinite} solutions for $\mb \theta_\star$ even if $\mb s_\star$ is given. Nonetheless, recent work showed that the implicit bias of gradient descent for overparameterized linear models and deep networks tend to find minimum $\ell_2$-norm solutions  \cite{zhang2021understanding}. To make our problem more well-posed, motivated by these results, we would like to find an $\mb \theta_\star$ with minimum $\ell_2$-norm, namely, 
\begin{equation}\label{eq:property-of-theta}
    \mb \theta_\star = \arg\min_{\mb \theta} \|\mb \theta\|_2^2   \quad \text{s.t.} \quad \y = \J  \mb\theta + \s_\star.
\end{equation}

Analogous to \eqref{eq:main_model}, we will show that $\mb \theta_\star$ and $\s_\star$ can be provably recovered by solving the problem
\begin{equation}\label{eq:objective-linear}
    \min_{\mb\theta, \u, \v} h(\mb \theta, \u, \v) \doteq \frac{1}{2} \| \J \mb\theta + \u \odot \u - \v \odot \v - \y \|_2^2,
\end{equation}
using the gradient descent algorithm with learning rates $\tau$ and $\alpha \tau$ on $\mb \theta$ and $\{\u, \v\}$, respectively:
\begin{equation}\label{eq:gradient-descent-linear}
\begin{split}
    \mb \theta_{k+1} &= \mb\theta_k - \tau \cdot \J^\top \r_k, \\
    \u_{k+1} &= \u_k - 2 \alpha \tau \cdot \u_k \odot \r_k, \\
    \v_{k+1} &= \v_k + 2 \alpha \tau \cdot \v_k \odot \r_k,
\end{split}
\end{equation}
where $\r_k \doteq \J \mb \theta_k + \u_k \odot \u_k - \v_k \odot \v_k - \mb y$. Based on these, our result can be summarized as follows.

\begin{theorem}[Main result, informal]\label{thm:main}
Suppose $\mb J$ is rank-$r$ and $\mu$-incoherent defined in \Cref{sec:exact-recovery}, and $\mb s_\star$ is $k$-sparse. If $k^2r<N/(4\mu)$, with $\tau \rightarrow 0$ and a proper choice of $\alpha$ depending on $(\mb J, \mb \theta_\star,k)$, the gradient dynamics of \eqref{eq:gradient-descent-linear} converges to the ground truth solution $(\mb \theta_\star,\mb s_\star)$ in \eqref{eq:model-linear} starting from a small initialization of $(\mb \theta, \mb u, \mb v)$.
\end{theorem}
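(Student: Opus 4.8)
The plan is to take the continuous-time limit $\tau\to0$, in which the iteration \eqref{eq:gradient-descent-linear} becomes the gradient flow of $h$, integrate that flow as far as possible in closed form, identify its limit point as the solution of a convex surrogate problem (the ``explicit implicit regularization'' promised in the introduction), and then show that this surrogate performs exact sparse--low-rank separation under the stated incoherence condition. Writing $\q(t):=\int_0^t\r(\sigma)\,d\sigma$ for the cumulative residual, the $\mb\theta$-flow $\dot{\mb\theta}=-\J^\top\r$ integrates to $\mb\theta(t)=\mb\theta_0-\J^\top\q(t)$, so $\mb\theta$ always lies in $\mb\theta_0+\mathrm{row}(\J)$; the flows $\dot u_j=-2\alpha u_jr_j$ and $\dot v_j=2\alpha v_jr_j$ integrate to $u_j(t)=u_{0,j}e^{-2\alpha q_j(t)}$ and $v_j(t)=v_{0,j}e^{2\alpha q_j(t)}$. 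Hence the whole trajectory is a function of $\q(t)$ alone, $\q$ obeys an autonomous ODE, and the corruption estimate has the closed form $s_j(t)=u_{0,j}^2e^{-4\alpha q_j(t)}-v_{0,j}^2e^{4\alpha q_j(t)}$, a monotone ($\sinh$-type) function of $q_j$.

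Next I would prove the flow converges to an equilibrium with $\r=\0$, i.e.\ $\J\mb\theta_\infty+\s_\infty=\y$. Along the flow $h$ is nonincreasing and $\int_0^\infty\|\nabla h\|^2\,dt<\infty$; since a nonzero initialization keeps $u_j(t)$ and $v_j(t)$ from ever vanishing, a boundedness argument (ruling out $q_j(t)\to\pm\infty$, which would blow up $\|\u\|$ or $\|\v\|$ and hence $h$) forces $\q(t)\to\q_\infty$ and $\r(t)\to\0$. Because $\mb\theta_\infty=\mb\theta_0-\J^\top\q_\infty$ and $q_{\infty,j}$ is exactly the scaled $\mathrm{arcsinh}$ of $s_{\infty,j}$, the pair $(\mb\theta_\infty,\s_\infty)$ satisfies the KKT conditions of the strictly convex program
\[ \min_{\mb\theta,\s}\ \frac12\|\mb\theta-\mb\theta_0\|_2^2+\frac{1}{4\alpha}\sum_{j=1}^{N}Q_\delta(s_j)\quad\text{s.t.}\quad\J\mb\theta+\s=\y, \]
with multiplier $\q_\infty$, where $Q_\delta(s)=s\,\mathrm{arcsinh}(s/2\delta^2)-\sqrt{s^2+4\delta^4}+2\delta^2$ is the effective regularizer induced by the Hadamard over-parameterization and $\delta$ is the initialization scale (up to lower-order corrections from the randomness of $\u_0,\v_0$); hence $(\mb\theta_\infty,\s_\infty)$ is its unique global minimizer. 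Sending $\mb\theta_0\to\0$ makes the $\mb\theta$-part the minimum-$\ell_2$-norm solution, matching \eqref{eq:property-of-theta}.

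I would then let the initialization scale $\delta\to0$: since $Q_\delta(s)/\log(1/\delta)\to2|s|$ for $s\neq0$ while $\frac12\|\mb\theta-\mb\theta_0\|_2^2$ stays bounded, after normalization the surrogate degenerates to robust ($\ell_1$) regression, i.e.\ $\s_\infty\to\arg\min\{\|\s\|_1:\y-\s\in\mathrm{col}(\J)\}$, equivalently $\min_{\mb\theta}\|\y-\J\mb\theta\|_1$. It then remains to establish the exact-recovery statement: when $\J$ is rank-$r$ and $\mu$-incoherent and $\s_\star$ is $k$-sparse with $k^2r<N/(4\mu)$, the vector $\s_\star$ is the \emph{unique} minimizer of this $\ell_1$ problem, so that $\mb\theta_\star$ is also recovered. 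I would prove this by constructing a dual certificate: a $\mb\nu\in\bb R^N$ with $\nu_j=\sign(s_{\star,j})$ on $\supp(\s_\star)$, $|\nu_j|<1$ off the support, and $\J^\top\mb\nu$ reproducing $\mb\theta_\star$. The component of $\mb\nu$ inside $\mathrm{col}(\J)$ is pinned down by $\mb\theta_\star$ and is $O(\sqrt{\mu r/N})$ in $\ell_\infty$ by incoherence, while the remaining $N-r$ degrees of freedom are spent matching the sign pattern on the $k$-element support; bounding how the support constraints propagate through $\J$ back into the off-support entries is exactly what yields $k^2r<N/(4\mu)$ as the condition for those entries to remain strictly below $1$. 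For a fixed (rather than vanishing) small $\delta$ the same certificate works once $\delta^2\ll\min_{j\in\supp(\s_\star)}|s_{\star,j}|$, and the threshold on $\alpha$ keeps the effective penalty weight $\asymp\log(1/\delta)/\alpha$ in the range that forces exact sparsity while the residual path stays well behaved.

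The main obstacle, I expect, is two entangled difficulties. First, showing the \emph{non-convex} flow on $(\mb\theta,\u,\v)$ really lands at the global optimum of the convex surrogate rather than at a spurious equilibrium: the exponential closed forms make this tractable, but one still needs uniform control of the residual path $\r(t)$ and boundedness of $\q(t)$, which is where the upper threshold on $\alpha$ comes from. Second, the dual-certificate construction under the comparatively weak incoherence hypothesis, which is the quantitative core of \Cref{sec:exact-recovery}; the appearance of $k^2$ rather than the information-theoretically optimal $k$ reflects a crude $\ell_\infty$/union-bound control of the certificate rather than a sharp restricted-isometry-type argument. Chaining the pieces together --- gradient flow $\to$ implicit $Q_\delta$-penalty $\to$ $\ell_1$ robust regression $\to$ exact recovery --- then gives \Cref{thm:main}.
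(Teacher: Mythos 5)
Your overall architecture --- pass to the $\tau\to0$ gradient flow, integrate it in closed form, identify the limit as the minimizer of a convex surrogate, and reduce exact recovery to an $\ell_1$-type problem under incoherence --- is the paper's strategy (\Cref{thm:implicit-bias} combined with \Cref{thm:exact-recovery}), and your closed forms for $\mb\theta_t,\u_t,\v_t$ as functions of the integrated residual coincide with \eqref{eq:gradient-flow-equation}. The two technical blocks are executed differently, though. For the implicit bias, the paper never works with the finite-initialization hypentropy penalty $Q_\delta$ followed by a ``weight $\to\infty$, then min-norm $\mb\theta$'' degeneration; instead it couples the learning-rate ratio to the initialization scale via $\alpha(\gamma)=-\log\gamma/(2\lambda)$ and verifies directly that the double limit ($t\to\infty$, then $\gamma\to0$) satisfies the KKT system \eqref{eq:optimality-linear} of the single convex program \eqref{eq:convex-linear} with \emph{finite} $\lambda$; this keeps the $\ell_2$ term on $\mb\theta$ and the $\ell_1$ term in one objective and makes the threshold on $\alpha$ transparent ($\lambda>\lambda_0$ is exactly $\alpha<\log(1/\gamma)/(2\lambda_0)$), whereas your lexicographic limit would need an additional epi-convergence-style argument that the limit of minimizers selects the minimum-norm $\mb\theta$ among $\ell_1$-optimal $\s$. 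For exact recovery, you propose a dual-certificate construction; the paper argues primally via the null space property of an annihilator $\A$ of $\J$ (Lemma~\ref{thm:recovery-under-nus} and Lemma~\ref{thm:nus-under-incoherence}), showing that $k^2r<N/(4\mu)$ yields NSP with constant $\rho<1$ and that the composite objective then strictly increases away from $(\mb\theta_\star,\s_\star)$ once $\lambda>\lambda_0=2\tfrac{1+\rho}{1-\rho}\|\U\mb\Sigma^{-1}\V^\top\mb\theta_\star\|_\infty$. Your certificate route is workable and would produce the same $k\sqrt{\mu r/N}\lesssim 1$ scaling (and the same dependence of the threshold on $\|\U\mb\Sigma^{-1}\V^\top\mb\theta_\star\|_\infty$), but it is the portion you leave most schematic.

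One substantive caution: you assert unconditional convergence of the flow from ``$h$ nonincreasing plus boundedness of $\q$,'' but $\nabla h\to0$ does not force $\r\to0$ on coordinates where $u_j,v_j\to0$, and boundedness of $\q$ alone does not give convergence of $\q$. The paper does not prove this either: \Cref{thm:implicit-bias} explicitly \emph{assumes} the limits exist and only shows (via the blow-up contradiction, complemented by the strict-saddle result \Cref{thm:landscape}) that any such limit must be a global minimizer; the informal \Cref{thm:main} is conditional in the same sense. So either state convergence as a hypothesis, as the paper does, or supply a genuine convergence proof --- your current sketch does not deliver one.
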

We state our result at a high level with more technical details in \Cref{sec:landscape} and \Cref{sec:exact-recovery}. The overall idea of the proof can be sketched through the following two steps.
\ifdefined\ISARXIV
\begin{itemize}[topsep=0.3em,itemsep=0.1em]
\else
\begin{itemize}[leftmargin=*,topsep=0.0em,noitemsep]
\fi
    \item First, although the problem \eqref{eq:objective-linear} is \emph{nonconvex}, in \Cref{sec:landscape} we show that it has benign global landscape, and that the gradient descent \eqref{eq:gradient-descent-linear} converges to particular global solutions that are the same as solutions to a \emph{convex} problem with explicit regularizations on $\mb \theta$ and $\s$.
    \item  Building upon above results, in \Cref{sec:exact-recovery} we complete our analysis by showing that $\mb \theta_\star$ and $\s_\star$ can be exactly recovered by the convex problem with a small enough value for $\alpha$.
\end{itemize}
Throughout the analysis, we corroborate our findings with numerical simulations.

\subsection{Landscapes \& Implicit Sparse Regularization}
\label{sec:landscape}

\paragraph{Benign global landscape.} We start by characterizing the nonconvex landscape of \eqref{eq:objective-linear}, showing the following result.

\begin{proposition}\label{thm:landscape}
Any critical point of \eqref{eq:objective-linear} is either a global minimizer, or it is a strict saddle \cite{ge2015escaping} with its Hessian having at least one negative eigenvalue.
\end{proposition}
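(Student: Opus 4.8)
The plan is to verify the first- and second-order optimality conditions of the least-squares objective $h$ directly. First I would write $\r = \J\mb\theta + \u\odot\u - \v\odot\v - \y$ for the residual at a point $(\mb\theta,\u,\v)$ and compute the three blocks of the gradient: $\nabla_{\mb\theta} h = \J^\top\r$, $\nabla_{\u} h = 2\,\u\odot\r$, and $\nabla_{\v} h = -2\,\v\odot\r$. Setting all of them to zero shows that at any critical point, for every coordinate $i$ one has $u_i r_i = 0$ and $v_i r_i = 0$; equivalently, either $r_i = 0$ or $u_i = v_i = 0$.

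Next I would dispose of the easy case: if $\r = \0$ at a critical point, then $h = 0$, and since $h \ge 0$ everywhere and the equation $\J\mb\theta + \u\odot\u - \v\odot\v = \y$ is always solvable (e.g.\ take $\mb\theta = \0$ and pick $u_i,v_i$ with $u_i^2 - v_i^2 = y_i$ coordinatewise), the value $0$ is the global minimum, so the point is a global minimizer.

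It remains to show that a critical point with $\r \ne \0$ is a strict saddle. I would fix a coordinate $i$ with $r_i \ne 0$; by the stationarity conditions above, $u_i = v_i = 0$ there. Writing $g(\mb\theta,\u,\v) = \J\mb\theta + \u\odot\u - \v\odot\v - \y$ so that $h = \tfrac12\|g\|_2^2$, the Hessian splits into a Gauss--Newton term plus the curvature of $g$, and on a direction $\w = (\delta\mb\theta,\delta\u,\delta\v)$ it equals
\[
\w^\top \nabla^2 h\,\w \;=\; \big\|\J\,\delta\mb\theta + 2\,\u\odot\delta\u - 2\,\v\odot\delta\v\big\|_2^2 \;+\; 2\sum_{j} r_j\big(\delta u_j^2 - \delta v_j^2\big).
\]
I would then test the one-hot direction $\delta\v = \e_i$ (with $\delta\mb\theta = \0$, $\delta\u = \0$) when $r_i > 0$, and $\delta\u = \e_i$ when $r_i < 0$. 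In both cases the first term vanishes because $u_i = v_i = 0$, while the second term equals $-2\lvert r_i\rvert < 0$. Hence the Hessian has a strictly negative eigenvalue, so the critical point is a strict saddle in the sense of \cite{ge2015escaping}.

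The argument is essentially mechanical, so I do not anticipate a real obstacle; the only points needing care are to retain the second-order term $\sum_j g_j\nabla^2 g_j$ coming from the quadratic over-parameterization (it is precisely the sign asymmetry between the $\u$- and $\v$-blocks that supplies a negative-curvature direction regardless of the sign of $r_i$), and to note that the Gauss--Newton term drops out for the chosen direction exactly because $u_i = v_i = 0$ at the critical point.
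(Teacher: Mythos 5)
Your proposal is correct and follows essentially the same route as the paper: identical gradient computations showing that a non-global critical point has a coordinate $i$ with $u_i=v_i=0$ and $r_i\neq 0$, the same Hessian quadratic form (you write it compactly in Gauss--Newton form, the paper expands it), and the same one-hot negative-curvature directions chosen according to the sign of $r_i$. Your explicit check that $\r=\0$ implies a global minimizer (since the value $0$ is attainable) is a small completeness point the paper leaves implicit, but otherwise the arguments coincide.
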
 
\ifdefined\ISARXIV
\else
\fi

For a strict saddle function, recent work \cite{lee2016gradient} showed that gradient descent with random initialization almost surely escapes saddle points and converges to a local minimizer. Thus, Proposition~\ref{thm:landscape} ensures that the algorithm in \eqref{eq:gradient-descent-linear} almost surely converges to a global solution of \eqref{eq:objective-linear}.


However, because there are infinite many global solutions for the overparameterized model \eqref{eq:objective-linear} and not all global solutions are of equal quality,\footnote{In other words, not all global solutions recover the underlying $\mb \theta_\star$ and $\s_\star$} convergence to a global solution alone is not sufficient for us to establish the correctness of our method. Nonetheless, as we will show in the following, the particular choice of the algorithm in \eqref{eq:gradient-descent-linear} enables it to converge to a particular \emph{regularized} global solution. 




\paragraph{Implicit sparse regularization.} To understand which solution the algorithm \eqref{eq:gradient-descent-linear} converges to, we study its \emph{gradient flow} counterpart by taking the stepsize $\tau \rightarrow 0 $ in \eqref{eq:gradient-descent-linear}. 
Thus, the dynamics of such a gradient flow is governed by the following differential equations
\begin{equation}\label{eq:gradient-flow-linear}
\begin{split}
    \dot{\mb \theta}_t(\gamma, \alpha) &= - \mb J^\top \r_t(\gamma, \alpha), \\
    \dot{\u}_t(\gamma, \alpha) &= - 2 \alpha \cdot \u_t(\gamma, \alpha) \odot \r_t(\gamma, \alpha), \\
    \dot{\v}_t(\gamma, \alpha) &= 2 \alpha \cdot \v_t(\gamma, \alpha) \odot \r_t(\gamma, \alpha),
\end{split}
\end{equation}
where we define
\begin{equation}\label{eq:def-residual}
    \r_t(\gamma, \alpha) = \J \mb \theta_t(\gamma, \alpha) + \u_t(\gamma, \alpha) \odot \u_t(\gamma, \alpha) - \v_t(\gamma, \alpha) \odot \v_t(\gamma, \alpha) - \y.
\end{equation}
Here, we assume that $\mb \theta, \mb u$, and $\mb v$ are initialized at
\begin{equation}\label{eq:gradient-flow-init-linear}
   \mb \theta_0(\gamma, \alpha) = \mb 0, ~\u_0(\gamma, \alpha) = \gamma \1, ~\v_0(\gamma, \alpha) = \gamma \1,
\end{equation}
with some small $\gamma >0$. 
Solving the differential equations in \eqref{eq:gradient-flow-linear} gives the gradient flow
\begin{equation}\label{eq:gradient-flow-equation}
   \mb \theta_t(\gamma, \alpha) = \mb J^\top \mb \nu_t(\gamma, \alpha), \quad
    \u_t(\gamma, \alpha) = \gamma \exp(2\alpha \mb \nu_t(\gamma, \alpha)), \quad
    \v_t(\gamma, \alpha) = \gamma \exp(-2\alpha \mb \nu_t(\gamma, \alpha)),
\end{equation}
where we define
\begin{equation}\label{eq:def-nu}
    \mb \nu_t(\gamma, \alpha) \doteq - \int_0^t \r_\tau(\gamma, \alpha) d\tau.
\end{equation} 
The following result shows that the solution that the gradient flow $\big(\mb \theta_t(\gamma, \alpha), \u_t(\gamma, \alpha), \v_t(\gamma, \alpha)\big)$ in \eqref{eq:gradient-flow-equation} converges to at $t \to \infty$ is a global solution to \eqref{eq:objective-linear} that is \emph{regularized} with a particular of $(\gamma, \alpha)$.




\begin{proposition}\label{thm:implicit-bias}
Consider the gradient flow in \eqref{eq:gradient-flow-equation} with the initialization in \eqref{eq:gradient-flow-init-linear}. 
\begin{itemize}[leftmargin=*]
    \item \textbf{(Global convergence)} For any $(\gamma, \alpha)$, if the limit 
    \begin{equation}\label{eq:def-gradient-flow-infty}
        \Big(\mb \theta_\infty(\gamma, \alpha), \,\u_\infty(\gamma, \alpha), \,\v_\infty(\gamma, \alpha)\Big) 
        \doteq 
        \lim_{t \to \infty} \Big(\mb \theta_t(\gamma, \alpha), \,\u_t(\gamma, \alpha),\, \v_t(\gamma, \alpha)\Big) 
    \end{equation}
    of the gradient flow exists, then $\big(\mb \theta_\infty(\gamma, \alpha), \u_\infty(\gamma, \alpha), \v_\infty(\gamma, \alpha)\big)$ is a global solution to \eqref{eq:objective-linear}. 
    \item \textbf{(Implicit regularization)} Fix any $\lambda > 0$ and let $\alpha$ be a function of $\gamma$ as
    \begin{equation}\label{eq:implicit-bias-parameter-relation}
        \alpha(\gamma) = - \frac{\log \gamma }{2\lambda}.
    \end{equation}
    If the limit
    \begin{equation}\label{eq:def-theta-hat}
        \Big( \wh{\mb \theta}, \, \wh{\u}, \, \wh{\v} \Big) 
        \doteq \lim_{\gamma \to 0}  \Big( \mb \theta_\infty(\gamma, \alpha(\gamma)), \, \u_\infty(\gamma, \alpha(\gamma)),\, \v_\infty(\gamma, \alpha(\gamma)) \Big)
    \end{equation}
    exists, then $\big( \wh{\mb \theta}, \, \wh{\u}, \, \wh{\v} \big)$ is a global solution to \eqref{eq:objective-linear}. 
    In particular, let
    \begin{equation}\label{eq:def-s-hat}
        \wh{\s} \doteq \wh{\u} \odot \wh{\u} - \wh{\v} \odot \wh{\v},
    \end{equation}
    then $(\wh{\mb \theta}, \wh{\s})$ is an optimal solution to the following convex program 
    \begin{align}\label{eq:convex-linear}
        \min_{\mb \theta, \, \s} \frac{1}{2} \norm{\mb \theta}{2}^2 + \lambda \norm{\s}{1},\quad \text{s.t.} \quad \mb y = \mb J \mb \theta + \s.
    \end{align}

\end{itemize}

\end{proposition}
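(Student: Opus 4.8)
The plan is to read off everything from the closed-form gradient flow \eqref{eq:gradient-flow-equation} and to match the resulting limit point against the KKT conditions of the convex program \eqref{eq:convex-linear}; throughout, $(\cdot)_j$ denotes the $j$-th coordinate and $\mb\nu_\infty$ is $\lim_{t\to\infty}\mb\nu_t$ for $\mb\nu_t$ as in \eqref{eq:def-nu}. For the \emph{global convergence} bullet, fix $(\gamma,\alpha)$ and assume the limit $(\mb\theta_\infty,\u_\infty,\v_\infty)$ in \eqref{eq:def-gradient-flow-infty} exists. By \eqref{eq:gradient-flow-equation} the iterates $\u_t,\v_t$ are entrywise strictly positive, so $\u_\infty,\v_\infty\ge\mathbf{0}$; in fact $\u_\infty>\mathbf{0}$ entrywise, since $(\u_\infty)_j=0$ would force $(\mb\nu_t)_j=\tfrac1{2\alpha}\log\big((\u_t)_j/\gamma\big)\to-\infty$ and hence $(\v_t)_j=\gamma\exp(-2\alpha(\mb\nu_t)_j)\to+\infty$, contradicting convergence of $\v_t$; symmetrically $\v_\infty>\mathbf{0}$. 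Then $\mb\nu_t=\tfrac1{2\alpha}\log(\u_t/\gamma)$ converges to a finite $\mb\nu_\infty$, so by \eqref{eq:def-nu} the improper integral $\int_0^\infty\r_\tau\,d\tau=-\mb\nu_\infty$ converges; since also $\r_t\to\r_\infty:=\J\mb\theta_\infty+\u_\infty\odot\u_\infty-\v_\infty\odot\v_\infty-\y$ by continuity, and a function on $[0,\infty)$ that has a limit and a convergent improper integral must tend to $\mathbf{0}$, we get $\r_\infty=\mathbf{0}$. Hence $h(\mb\theta_\infty,\u_\infty,\v_\infty)=0$, which is the global minimum of \eqref{eq:objective-linear} (for any $\mb\theta$ the equation $\u\odot\u-\v\odot\v=\y-\J\mb\theta$ is solvable coordinatewise), so the limit is a global minimizer.

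For the \emph{implicit regularization} bullet, impose \eqref{eq:implicit-bias-parameter-relation}, which gives $4\alpha(\gamma)=-2\log\gamma/\lambda$, and let $\gamma\to0$, writing $\mb\nu_\infty,\u_\infty,\v_\infty,\mb\theta_\infty$ for the flow limits at $(\gamma,\alpha(\gamma))$ (these exist for small $\gamma$ since their $\gamma\to0$ limit is assumed to). From \eqref{eq:gradient-flow-equation} one reads off the coordinatewise identities
\[
  \mb\theta_\infty=\J^\top\mb\nu_\infty,\qquad (\u_\infty)_j^2=\gamma^{\,2-2(\mb\nu_\infty)_j/\lambda},\qquad (\v_\infty)_j^2=\gamma^{\,2+2(\mb\nu_\infty)_j/\lambda}.
\]
Since $(\u_\infty)_j^2$ and $(\v_\infty)_j^2$ converge to finite limits while $\gamma\to0^+$, neither exponent can tend to a negative value along a subsequence, which forces $\limsup_{\gamma\to0}|(\mb\nu_\infty)_j|\le\lambda$ for every $j$. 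Hence $\{\mb\nu_\infty\}$ is bounded, and along some subsequence $\gamma_n\to0$ it converges to a vector $\wh\nu$ with $\norm{\wh\nu}{\infty}\le\lambda$. Along this subsequence $\wh{\mb\theta}=\J^\top\wh\nu$, and applying the first bullet at each $\gamma_n$ and letting $n\to\infty$ shows both that $(\wh{\mb\theta},\wh\u,\wh\v)$ is a global minimizer of \eqref{eq:objective-linear} and that $\y=\J\wh{\mb\theta}+\wh\s$ with $\wh\s$ as in \eqref{eq:def-s-hat}.

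It remains to identify $\wh\nu$ as an $\ell_1$-subgradient multiplier. Fix $j$ with $\wh s_j\neq0$, say $\wh s_j>0$; then $\wh u_j^2=\wh s_j+\wh v_j^2>0$ (writing $\wh u_j,\wh v_j,\wh s_j$ for the coordinates of $\wh\u,\wh\v,\wh\s$), so $\gamma_n^{\,2-2(\mb\nu_\infty)_j/\lambda}\to\wh u_j^2\in(0,\infty)$, which is possible only if the exponent tends to $0$, i.e. $\wh\nu_j=\lambda=\lambda\,\sign(\wh s_j)$; the case $\wh s_j<0$ is symmetric through $(\v_\infty)_j$. Together with $\norm{\wh\nu}{\infty}\le\lambda$ this gives $\wh\nu\in\lambda\,\partial\norm{\wh\s}{1}$. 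Thus $(\wh{\mb\theta},\wh\s)$ with multiplier $\wh\nu$ satisfies the KKT system of \eqref{eq:convex-linear} — stationarity $\wh{\mb\theta}=\J^\top\wh\nu$ and $\wh\nu\in\lambda\,\partial\norm{\wh\s}{1}$, together with feasibility $\y=\J\wh{\mb\theta}+\wh\s$ — and since \eqref{eq:convex-linear} is convex with affine constraints these conditions are sufficient for global optimality, so $(\wh{\mb\theta},\wh\s)$ solves \eqref{eq:convex-linear}.

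The only substantive step is this last KKT matching: showing that the ``hyperbolic'' over-parameterization $\u\odot\u-\v\odot\v$ under the \emph{exact} scaling \eqref{eq:implicit-bias-parameter-relation} drives the induced multiplier $\mb\nu_\infty$ to a genuine $\ell_1$-subgradient — keeping $|\wh\nu_j|\le\lambda$ on the zero set of $\wh\s$ and pinning $\wh\nu_j=\pm\lambda$ off it — which is precisely where $\alpha(\gamma)=-\log\gamma/(2\lambda)$ enters, as any other rate would yield a degenerate or mismatched limiting penalty. The remaining work is bookkeeping: establishing existence of $\mb\nu_\infty$ from the assumed convergence of the iterates, extracting the convergent subsequence of $\mb\nu_\infty$, and interchanging the limits $t\to\infty$ and $\gamma\to0$ to transport feasibility and global optimality to $(\wh{\mb\theta},\wh\u,\wh\v)$.
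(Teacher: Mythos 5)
Your proof is correct and, at its core, follows the same route as the paper: read everything off the closed-form flow \eqref{eq:gradient-flow-equation}, show the limiting residual vanishes (hence the limit is a global minimizer of \eqref{eq:objective-linear}), and then verify the KKT system of \eqref{eq:convex-linear} with the multiplier $\wh{\mb \nu}$ obtained from $\mb \nu_t$ in \eqref{eq:def-nu}, where the scaling \eqref{eq:implicit-bias-parameter-relation} pins $\wh{\nu}^i = \lambda\,\sign(\wh{s}^i)$ on the support of $\wh{\s}$ and $|\wh{\nu}^i| \le \lambda$ elsewhere --- the same case analysis as the paper's, just phrased through exponents of $\gamma$. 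Two local choices differ, both defensible and in fact slightly tighter than the paper's. For the global-convergence bullet, the paper argues by contradiction via Lemma~\ref{thm:critical-but-non-global} (implicitly using that the flow limit is a critical point and then blowing up a single coordinate of $\v_t$), whereas you exploit $\u_t \odot \v_t \equiv \gamma^2 \1$ to get strict positivity of $\u_\infty, \v_\infty$, hence convergence of $\mb \nu_t$, and conclude $\r_\infty = \0$ from ``convergent improper integral plus existing limit''; you also state explicitly that zero residual is attainable, so it is the global minimum --- a step the paper leaves implicit. For the implicit-regularization bullet, the paper tacitly assumes the full limit $\wh{\mb \nu} = \lim_{\gamma \to 0} \mb \nu_\infty(\gamma, \alpha(\gamma))$ exists, while you only need a subsequential limit, extracted from the bound $\limsup_{\gamma\to 0}\|\mb \nu_\infty\|_\infty \le \lambda$ that finiteness of $\wh{\u}, \wh{\v}$ forces; this bound also subsumes the paper's separate $\epsilon$-argument for coordinates with $\wh{s}^i = 0$, and a subsequential multiplier suffices since KKT sufficiency (which you correctly invoke for this convex problem with affine constraints, matching the paper's KKT lemma) only requires exhibiting one $\wh{\mb \nu}$. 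The paper's version buys reuse of its landscape machinery; yours buys fewer implicit assumptions with essentially the same computation.
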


As we observe from the above result, because $(\wh{\mb \theta}, \wh{\s})$ that the gradient flow \eqref{eq:gradient-flow-linear} converges to is also an optimal solution of \eqref{eq:convex-linear}, it implies that $(\wh{\mb \theta}, \wh{\s})$ is regularized. In particular, the $\ell_1$-norm regularization on $\s$ comes as a result of implicit regularization on overparameterization $\mb s = \u \odot \u - \v \odot \v$, leading to a sparse solution on $\mb s$ as we desired. On the other hand, the $\ell_2$ regularization on $\mb \theta$ leads to the desired  minimum $\ell_2$-norm solution as we discussed in \eqref{eq:property-of-theta}. Thus, the only question remains is whether the ground truth $(\mb \theta_\star, \mb s_\star)$ in \eqref{eq:model-linear} can be identified through solving the convex problem \eqref{eq:convex-linear}, which we will discuss in the following \Cref{sec:exact-recovery}.



\begin{figure}[t]
\centering  
\ifdefined\ISARXIV
\includegraphics[width=0.75\linewidth,trim={1cm 0 3cm 0},clip]{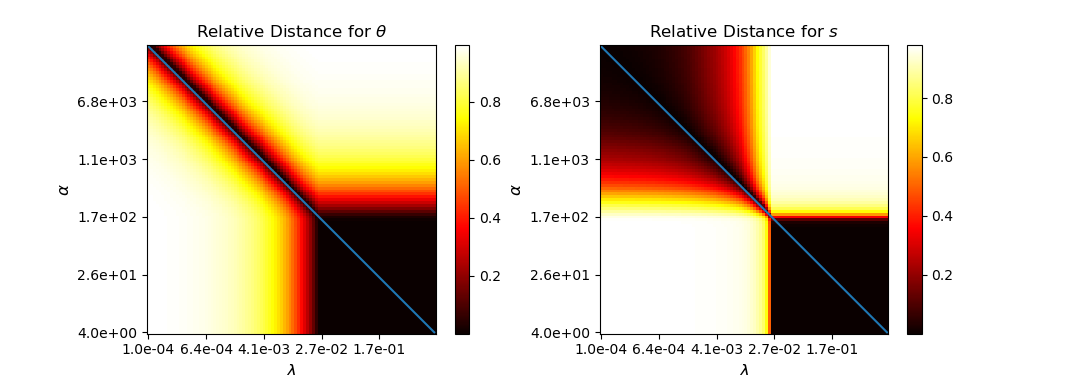}
\else
\includegraphics[width=0.99\linewidth,trim={1cm 0 3cm 0},clip]{figs/implicit_bias.png}
\fi
\caption{The gradient descent in \eqref{eq:gradient-descent-linear} and the convex problem in \eqref{eq:convex-linear} produce the same solutions with $\alpha = - \frac{\log \gamma}{2\lambda}$. 
For fixed data $(\J, \y)$, left figure shows the relative difference $\frac{\|\mb \theta_\alpha - \mb \theta_\lambda\|_2}{\max\{\|\mb \theta_\alpha\|_2, \|\mb \theta_\lambda\|_2\}}$ between the solution $\mb \theta_\alpha$ to \eqref{eq:gradient-descent-linear} with varying values of $\alpha$ (in y-axis) and the solution $\mb \theta_\lambda$ computed from \eqref{eq:convex-linear} with varying values of $\lambda$ (in x-axis). 
Likewise, right figure shows the relative difference for $\s$. 
Blue line shows the curve $\alpha = - \frac{\log \gamma}{2\lambda}$ where $\gamma$ is fixed to $\exp{(-8)}$ in all experiments. }
\label{fig:implicit_bias}

\end{figure}


\paragraph{Numerical verification.}
While Proposition~\ref{thm:implicit-bias} is proved for gradient flow with both learning rate $\tau \rightarrow 0 $ and initialization scale $\gamma \rightarrow 0$, we numerically show that such a result also holds non-asymptotically with finitely small $\tau$ and $\gamma$.


Given a tuple $(N, p, r, k)$ of model parameters, we generate simulation data $(\J, \mb \theta_\star, \s_\star, \y)$ as follows. 
The matrix $\J \in \bb R^{N \times p}$ is generated by multiplying two randomly generated matrices of shape $N\times r$ and $r \times p$, respectively with entries drawn i.i.d. from a standard Gaussian distribution. 
The sparse vector $\s_\star \in \bb R^N$ is generated by randomly choosing $k$ entries to be i.i.d. standard Gaussian, with the rest of entries zero. Then, we generate a vector $\mb \theta \in \bb R^p$ with all entries drawn i.i.d. from a standard Gaussian distribution, and let $\y = \J \mb \theta + \s_\star$.  Finally, we set $\mb \theta_\star$ as the minimum $\ell_2$-norm solution according to \eqref{eq:property-of-theta}. 

In this experiment, we choose and fix $(N, p, r, k) = (20, 40, 3, 3)$ for the data generation described above.
With a varying learning rate $\alpha \in [4, 4000]$, we compute $(\mb \theta_\alpha, \s_\alpha)$ as the solution provided by gradient descent in \eqref{eq:gradient-descent-linear} with an initialization by \eqref{eq:gradient-flow-init-linear} with $\gamma = e^{-8}$.
With a varying regularization $\lambda \in [0.0001, 1]$ in \eqref{eq:convex-linear}, we compute $(\mb \theta_\lambda, \s_\lambda)$ as the solution provided by the convex problem in \eqref{eq:convex-linear} with weight parameter $\lambda$.\footnote{We use the ECOS solver \cite{domahidi2013ecos} provided in CVXPY \cite{diamond2016cvxpy} for solving \eqref{eq:convex-linear}.}
\Cref{fig:implicit_bias} provides a visualization of the relative difference $ \rho = \frac{\|\mb \theta_\alpha - \mb \theta_\lambda\|_2}{\max\{\|\mb \theta_\alpha\|_2, \|\mb \theta_\lambda\|_2\}}$ between $\mb \theta_\alpha$ and $\mb \theta_\lambda$ (and likewise for $\s$), across all pairs of $(\alpha, \lambda)$.
We can observe that as long as $(\alpha, \lambda)$ satisfies the relationship in \eqref{eq:implicit-bias-parameter-relation}, the relative difference $\rho$ is small for $\mb \theta$, which is also true for $\s$. On the other hand, the relative differences can be large if \eqref{eq:implicit-bias-parameter-relation} is not satisfied, corroborating Proposition \ref{thm:implicit-bias}.

\subsection{Exact Recovery under Incoherence Conditions}
\label{sec:exact-recovery}


Given the overparameterized model \eqref{eq:model-linear} with $\mb y \in \bb R^N$, $\mb \theta \in \bb R^p$, and $p \gg N$, there is no enough information from $\mb y$ to recover $\mb \theta_\star$ and $\s_\star$ even with the prior information that $\s_\star$ is sparse --  any given vector $\y \in \bb R^p$ can be decomposed as a summation of an arbitrary sparse vector $\mb s$ and a vector $\mb \theta$ cooked up from the column space of $\J$ \emph{as long as $\J$ has full row rank}. 

For the solution $\mb \theta_\star$ and $\s_\star$ to be identified, first, we assume that $\mb J$ is \emph{low-rank}, where it has been empirically observed in practical deep neural network $f_{\mb \theta}$ that the Jacobian matrix $\mb J$ of $f_{\mb \theta}$ is approximately low-rank \cite{oymak2019generalization}.\footnote{Our low-rank assumption is an idealization of the \emph{approximate} low-rank property of the Jacobian, which simplifies our analysis but at the cost that our model is not able to overfit to any corrupted labels as a deep neural network. We leave the study under \emph{approximate} low-rank assumption to future work.} However, the low-rank condition of $\mb J$ alone does \emph{not} guarantee identifiability, because it cannot address the separability between $\mb J\mb \theta_\star$ and $\mb s_\star$ -- following a similar argument as that in \cite{candes2011robust}, if any column of $\J$ has a single nonzero entry, then any $\s_\star$ that is supported on the same entry cannot be recovered without ambiguities. Hence, we further assume that the column space of $\J$ and the standard basis $[\e_1, \ldots, \e_N] \doteq \diag\{1, \ldots, 1\} \in \RR^{N \times N}$ are \emph{incoherent}, defined as follows.

\begin{definition}[\cite{candes2011robust}]\label{def:incoherence}
Let $\J = \U \mb \Sigma \V^\top \in \RR^{N \times p}$ be the compact SVD of $\J$ and $r$ be the rank of $\J$. The coherence of $\J$ (w.r.t. the standard basis) is defined as 
\begin{equation}
    \mu(\J) = \frac{N}{r} \max_{1 \le i \le N} \|\U^\top \e_i\|_2^2.
\end{equation}
\end{definition}

It should be noted that the low-rank and incoherence assumptions are common for matrix recovery \cite{davenport2016overview,chi2019nonconvex}. Based upon the above assumptions on $\mb J$ and $\mb s_\star$, we show the following. 
\begin{proposition}\label{thm:exact-recovery}
Let $r$ be the rank of $\J$ and $k$ be the number of nonzero entries of $\s_\star$.
If we have 
\begin{equation}\label{eq:incoherence-condition}
    k^2 r < \frac{N}{4\mu(\J)},
\end{equation}
then the solution to \eqref{eq:convex-linear} is $(\mb \theta_\star, \s_\star)$  for any $\lambda > \lambda_0$, where $\lambda_0 > 0$ is a scalar depending on  $(\J, \mb \theta_\star, k)$. 
\end{proposition}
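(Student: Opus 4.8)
The plan is to show that the pair $(\mb\theta_\star, \s_\star)$ satisfies the KKT optimality conditions of the convex program \eqref{eq:convex-linear} and that this program has a unique minimizer. First I would eliminate $\s = \y - \J\mb\theta$, turning \eqref{eq:convex-linear} into $\min_{\mb\theta}\tfrac12\norm{\mb\theta}{2}^2 + \lambda\norm{\y - \J\mb\theta}{1}$, which is strongly convex in $\mb\theta$ (Hessian of the first term is $\mb I$); hence it has a unique minimizer, and it suffices to certify that $\mb\theta_\star$ is optimal, since then the recovered sparse part is $\y - \J\mb\theta_\star = \s_\star$ automatically. By subgradient/Lagrangian calculus, $(\mb\theta_\star,\s_\star)$ is optimal iff there exists a dual certificate $\mb g\in\RR^N$ with (i) $\J^\top\mb g = \mb\theta_\star/\lambda$; (ii) $\mb g_S = \sign(\s_{\star S})$, where $S\doteq\supp(\s_\star)$ and $|S|=k$; and (iii) $\norm{\mb g_{S^c}}{\infty}\le 1$.

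Next I would construct such a $\mb g$. Because $\mb\theta_\star$ is the minimum-$\ell_2$-norm solution of $\J\mb\theta = \y - \s_\star$, it lies in $\col(\J^\top)$, so there is a unique $\mb\eta_0\in\col(\J)\subseteq\RR^N$ with $\J^\top\mb\eta_0 = \mb\theta_\star$, namely $\mb\eta_0 = (\J\J^\top)^\dagger\J\mb\theta_\star$. Condition (i) then forces the component of $\mb g$ in $\col(\J)$ to equal $\mb\eta_0/\lambda$, leaving the component $\mb b$ in $\ker(\J^\top)=\col(\J)^\perp$ free. Writing $\J = \U\mb\Sigma\V^\top$ for the compact SVD and $\mb\Phi\doteq\mb I - \U\U^\top$ for the projector onto $\col(\J)^\perp$, I would take $\mb b$ to be the minimum-norm element of $\col(\J)^\perp$ whose restriction to $S$ equals $\c\doteq\sign(\s_{\star S}) - (\mb\eta_0/\lambda)_S$, i.e. $\mb b = \mb\Phi\,\mb E_S(\mb\Phi_{SS})^{-1}\c$, where $\mb E_S$ selects the rows in $S$ and $\mb\Phi_{SS} = \mb I - \U_S\U_S^\top$ with $\U_S\in\RR^{k\times r}$ the row-submatrix of $\U$ indexed by $S$. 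The incoherence assumption in \Cref{def:incoherence} gives $\norm{\U_S}{}^2\le\norm{\U_S}{F}^2 = \sum_{i\in S}\norm{\U^\top\e_i}{2}^2\le k\,\mu(\J)r/N$, which is well below $1$ under \eqref{eq:incoherence-condition}; hence $\mb\Phi_{SS}$ is invertible with $\norm{(\mb\Phi_{SS})^{-1}}{}$ bounded by a small absolute constant, $\mb g\doteq\mb\eta_0/\lambda + \mb b$ is well defined, and (i)–(ii) hold by construction.

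The crux is verifying (iii). Here $\mb g_{S^c} = (\mb\eta_0/\lambda)_{S^c} + \mb b_{S^c}$ with $\mb b_{S^c} = -\U_{S^c}\U_S^\top(\mb\Phi_{SS})^{-1}\c$. For the second term, incoherence yields $\abs{\innerprod{\U^\top\e_i}{\U^\top\e_j}}\le\mu(\J)r/N$ for all $i,j$ via Cauchy–Schwarz, while $\norm{\c}{2}\le\sqrt k\,\norm{\c}{\infty}\lesssim\sqrt k$ (since $\sign(\s_{\star S})\in\{\pm1\}^k$ and $\norm{\mb\eta_0/\lambda}{\infty}$ will be made small); chaining these with the bounds on $\norm{\U_S}{}$ and $\norm{(\mb\Phi_{SS})^{-1}}{}$ through $\ell_1$–$\ell_2$ inequalities gives an estimate of the form $\norm{\mb b_{S^c}}{\infty}\le C\,\mu(\J)k^{2}r/N$ for an absolute constant $C$ — the extra power of $k$ relative to a tighter $\mu k r/N$ bound is exactly the slack one concedes by using these convenient norm conversions, and is why the hypothesis is stated as $k^2 r < N/(4\mu(\J))$ — which is then strictly below $1$ with a definite margin. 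For the first term I would define $\lambda_0$ so that $\lambda>\lambda_0$ forces both $\norm{(\mb\eta_0/\lambda)_{S^c}}{\infty}$ and $\norm{(\mb\eta_0/\lambda)_S}{2}$ (the latter entering $\norm{\c}{2}$) below the remaining margin; since $\mb\eta_0 = (\J\J^\top)^\dagger\J\mb\theta_\star$ and the margin depends on $k$ through the preceding bound, $\lambda_0$ comes out as a function of $(\J,\mb\theta_\star,k)$, as claimed. Combining the two estimates gives (iii), so $(\mb\theta_\star,\s_\star)$ satisfies the KKT conditions and, by strong convexity in $\mb\theta$, is the unique solution of \eqref{eq:convex-linear}.

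I expect the main obstacle to be the constant-tracking in the last step: making the off-support bound land below $1$ under precisely the condition $k^2 r < N/(4\mu(\J))$, while simultaneously picking a threshold $\lambda_0$ depending only on $(\J,\mb\theta_\star,k)$ that renders the $\mb\eta_0/\lambda$ contributions negligible — in particular, checking that the inversion of $\mb\Phi_{SS}$ and the $\ell_1$–$\ell_2$ conversions do not secretly demand a stronger hypothesis. A minor preliminary is to confirm $\mb\theta_\star\in\col(\J^\top)$ and that \eqref{eq:convex-linear} is feasible (e.g. $\mb\theta=\mb 0,\ \s=\y$), so that the infimum is attained.
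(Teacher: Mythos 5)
Your proposal is correct, but it takes a genuinely different route from the paper. The paper eliminates $\mb \theta$ (via the compact SVD of $\J$) and reduces \eqref{eq:convex-linear} to a problem in $\s$ alone with the constraint $\A \s = \A \s_\star$, where $\A$ annihilates $\J$ on the left; it then shows (Lemma~\ref{thm:nus-under-incoherence}) that the incoherence condition \eqref{eq:incoherence-condition} implies the null space property of $\A$ relative to $\supp(\s_\star)$, and concludes by a purely primal comparison of objective values, $g(\s) - g(\s_\star) \ge \big(\lambda\tfrac{1-\rho}{1+\rho} - 2\|\V\mb\Sigma^{-1}\U^\top\!\cdot\|_\infty\text{-type term}\big)\norm{\s-\s_\star}{1} > 0$, which yields a closed-form $\lambda_0$ and uniqueness for free via strict inequality (Lemma~\ref{thm:recovery-under-nus}). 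You instead eliminate $\s$, observe that strong convexity in $\mb\theta$ makes the minimizer unique, and construct an explicit dual certificate $\mb g = \mb\eta_0/\lambda + \b$ with $\b$ in $\col(\J)^\perp$ obtained by inverting $\mb\Phi_{SS} = \mb I - \U_S\U_S^\top$; incoherence enters through $\norm{\U_S}{}^2 \le k\mu(\J)r/N < 1/4$ (so $\norm{(\mb\Phi_{SS})^{-1}}{} \le 4/3$) and through the off-support bound, which indeed lands at order $(1+\norm{\mb\eta_0}{\infty}/\lambda)\,k^{2}\mu(\J) r/N$ (in fact a factor $k$ better is attainable), strictly below $1$ under \eqref{eq:incoherence-condition} once $\lambda$ exceeds a threshold depending on $(\J,\mb\theta_\star,k)$ -- so the constant-tracking you flag as the main risk does work out without strengthening the hypothesis. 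What each approach buys: the paper's NSP argument recycles a standard compressed-sensing lemma, avoids certificate construction and constant chasing, and handles uniqueness without any strict dual feasibility; your KKT route is constructive (the certificate you build is essentially the limit multiplier $\wh{\mb\nu}$ appearing in Proposition~\ref{thm:implicit-bias}, which makes the connection between the two propositions more transparent), and your uniqueness-from-strong-convexity-in-$\mb\theta$ observation is a clean substitute for the strict-inequality argument, at the price of somewhat more delicate bookkeeping in the bounds on $\mb\Phi_{SS}^{-1}$ and $\b_{S^c}$.
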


Thus, combining this result with Proposition~\ref{thm:implicit-bias}, the gradient flow in \eqref{eq:gradient-flow-linear} with initialization \eqref{eq:gradient-flow-init-linear} converges to $(\mb \theta_\star, \s_\star)$ when the choice of learning rate ratio $\alpha$ in \eqref{eq:gradient-flow-linear} is smaller than a certain threshold, justifying our claim in \Cref{thm:main}.

\ifdefined\ISARXIV
\begin{figure}[t]
\centering  
\subfigure[Varying $k$ with fixed $r=20$. \label{fig:effect-of-lambda-varying-k}]{\includegraphics[width=0.7\linewidth,trim={0 0.9cm 0 0},clip]{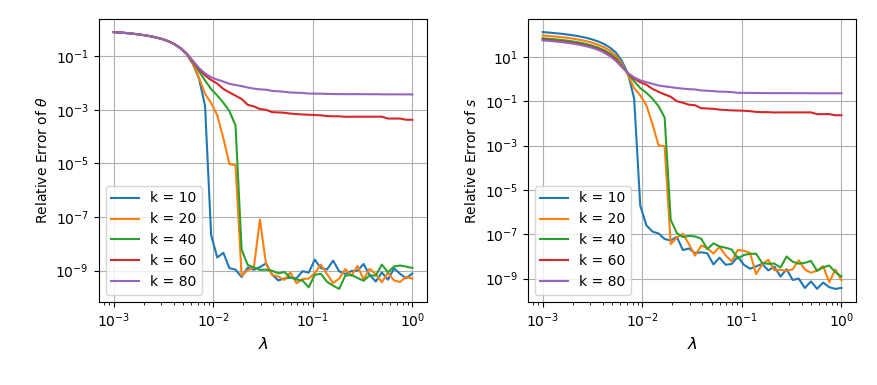}}
~
\subfigure[Varying $r$ with fixed $k=20$. \label{fig:effect-of-lambda-varying-r}]{\includegraphics[width=0.7\linewidth,trim={0 0.5cm 0 0},clip]{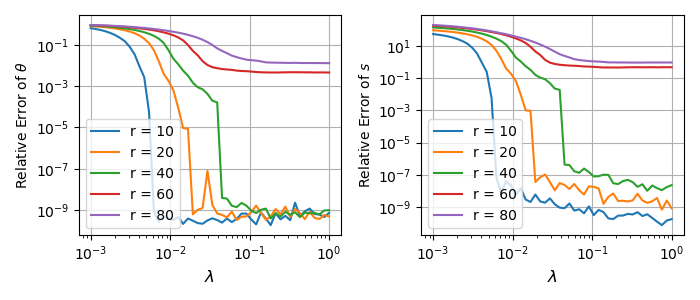}}
\caption{Effect of model parameter $\lambda$ for exact recovery by \eqref{eq:convex-linear}. 
The $y$-axis is the relative error of $\mb \theta$ (left) and $\s$ (right) defined as $\frac{\|\mb\theta - \mb\theta_\star\|_2}{\|\mb \theta_\star\|_2}$ and $\frac{\|\s - \s_\star\|_2}{\|\s_\star\|_2}$, respectively, where $(\mb \theta, \s)$ is the solution to \eqref{eq:convex-linear}. 
The curves are averages over $10$ independent trials. }
\label{fig:effect-of-lambda}
\end{figure}
\else
\begin{figure}[t]
\centering  
\subfigure[Varying $k$ with fixed $r=20$. \label{fig:effect-of-lambda-varying-k}]{\includegraphics[width=0.90\linewidth,trim={0 0.9cm 0 0},clip]{figs/effect_of_alpha_varying_k.png}}
\\
\subfigure[Varying $r$ with fixed $k=20$. \label{fig:effect-of-lambda-varying-r}]{\includegraphics[width=0.88\linewidth,trim={0 0.5cm 0 0},clip]{figs/effect_of_alpha_varying_r.png}}
\caption{Effect of model parameter $\lambda$ for exact recovery by \eqref{eq:convex-linear}. 
The $y$-axis is the relative error of $\mb \theta$ (left) and $\s$ (right) defined as $\frac{\|\mb\theta - \mb\theta_\star\|_2}{\|\mb \theta_\star\|_2}$ and $\frac{\|\s - \s_\star\|_2}{\|\s_\star\|_2}$, respectively, where $(\mb \theta, \s)$ is the solution to \eqref{eq:convex-linear}. 
The curves are averages over $10$ independent trials. }
\label{fig:effect-of-lambda}
\end{figure}
\fi

\paragraph{Numerical verification.} To corroborate Proposition \ref{thm:exact-recovery}, we numerically solve \eqref{eq:convex-linear} under varying conditions of $\lambda$, $r$, and $k$. The simulated data $(\J, \mb \theta_\star, \s_\star, \y)$ is generated the same way as the experimental part in \Cref{sec:landscape} with $N= 100$ and $p = 150$, and for an obtained solution $(\mb \theta,\mb s)$ via solving \eqref{eq:convex-linear}, we measure the relative recovery error $\epsilon_{\mb \theta} = \frac{\|\mb\theta - \mb\theta_\star\|_2}{\|\mb \theta_\star\|_2}$ and $\epsilon_{\mb s} = \frac{\|\s - \s_\star\|_2}{\|\s_\star\|_2}$.
\ifdefined\ISARXIV
\begin{itemize}[topsep=0.3em,itemsep=0.1em]
\else
\begin{itemize}[leftmargin=*,topsep=0.0em,noitemsep]
\fi
    \item \emph{Effects of the parameter $\lambda$.} Here, we consider the recovery with varying $\lambda \in [0.0001, 1]$. First, we fix $r = 20$ and vary $k \in \{10, 20, 40, 60, 80\}$, showing the relative recovery errors $\epsilon_{\mb \theta}$ and $\epsilon_{\mb s}$ in \Cref{fig:effect-of-lambda-varying-k}. Second, we fix $k = 20$ and vary $r \in \{10, 20, 40, 60, 80\}$, showing the results in \Cref{fig:effect-of-lambda-varying-r}. The results show a clear phase transition that correct recovery is obtained only when $\lambda$ is greater than a particular threshold $\lambda_0$. Moreover, $\lambda_0$ varies depending on $k$ and $r$, consistent with Proposition \ref{thm:exact-recovery}.
    \item \emph{Relationships between the rank $r$ and sparsity $k$.} Here, we fix $\lambda =0.1$ and plot the phase transition with respect to $r$ and $k$. For each $(r,k)$, the simulation is repeated for $20$ random instances, and for each instance we declare the recovery to be successful if $\epsilon_{\mb \theta}<0.001$ and $\epsilon_{\mb s}<0.001$. As shown in \Cref{fig:phase-transition}, the phase transition is consistent with Proposition \ref{thm:exact-recovery} that  successful recovery is achieved only when both $k$ and $r$ are small.
\end{itemize}

\begin{figure}[t]
\centering  
\ifdefined\ISARXIV
\includegraphics[width=0.7\linewidth,trim={0 0.2cm 0 0},clip]{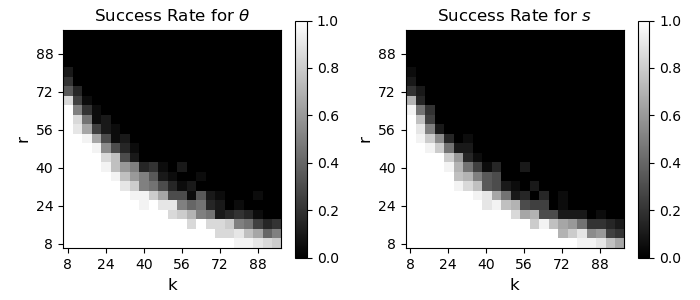}
\else
\includegraphics[width=0.96\linewidth,trim={0 0.2cm 0 0},clip]{figs/phase_transition.png}
\fi

\caption{Phase transition for solving \eqref{eq:convex-linear} over $20$ trials, with fixed $\lambda = 0.1$ and varying $k$, $r$. Recovery is declared success if $\frac{\|\mb\theta - \mb\theta_\star\|_2}{\|\mb \theta_\star\|_2}<0.001$ (left) and $\frac{\|\s - \s_\star\|_2}{\|\s_\star\|_2}<0.001$ (right).}
\label{fig:phase-transition}
\end{figure}

\section{Related Work and Discussion}\label{sec:conclusion}
\subsection{Prior Arts on Implicit Regularization}
Since overparameterized deep neural networks do not overfit (in the absence of data corruption) even without any explicit regularization \cite{zhang2021understanding}, it is argued that there are implicit regularizations pf learning algorithms that enable the models to converge to desired solutions. Under the assumption of linear or deep linear models,
many work characterized the mathematics of such implicit bias via explicit regularizations \cite{soudry2018implicit,gunasekar2018implicit,li2018algorithmic,oymak2019overparameterized,arora2019implicit,razin2020implicit,li2020towards,ji2020gradient,stoger2021small,jacot2021deep}. 
Among those, the closest related to ours include \cite{vaskevicius2019implicit,zhao2019implicit,woodworth2020kernel,li2021implicit,chou2021more}, which studied the implicit \emph{sparse} regularization induced by a term of the form $\u \odot \u - \v \odot \v$.


While all the above works aim to understand implicit regularization by studying linear models, the practical benefits of such studies are unclear. Our work provides an inspiring result showing that principled design with implicit regularization leads to robust learning of over-parameterized models.
In particular, our model in \eqref{eq:main_model} is motivated by existing studies on the implicit sparse regularization, but adds such a regularization to an (already) implicitly regularized model for handling sparse corruptions.
In other words, two forms of implicit regularization are involved in our model which poses new problems in the design of the optimization algorithm and in mathematical analysis. To the best of our knowledge, the only prior works that use implicit sparse regularization for robust learning are \cite{you2020robust,ma2021implicit,ding2021rank} which studied the robust recovery of low-rank matrices and images.
Among them, our work extends \cite{you2020robust} to the problem of image classification with label noise, demonstrates its effectiveness, and provides dedicated theoretical analyses. Additionally, methods in \cite{ma2021implicit,ding2021rank} require a particular learning rate schedule that may not be compatible with commonly used schedules such as cosine annealing \cite{loshchilov2016sgdr} in image classification.

\subsection{Relationship to Existing Work on Label Noise}
Deep neural networks are over-parameterized hence prone to \emph{overfitting} to the label noises.
While many popular regularization techniques for alleviating overfitting, such as label smoothing \cite{szegedy2016rethinking,lukasik2020does,wei2021understanding} and \emph{mixup} \cite{zhang2018mixup}, are useful for mitigating the impact of label noise, they do not completely solve the problem due to a lack of precise noise modeling. In the following, we discuss three of the most popular line of work dedicated to the label noise problem; we refer the reader to the survey papers \cite{algan2021image,song2020learning,wei2021learning} for a comprehensive review.

\paragraph{Loss design.} 
Robust loss function, such as the $\ell_1$ loss \cite{ghosh2017robust}, is one of the most popular approaches to the label noise problem which has many recent extensions \cite{zhang2018generalized,wang2019symmetric,amid2019robust,ma2020normalized,yu2020learning,wei2021optimizing,ma2022blessing}. 
The method is based on reducing the loss associated with large outlying entries, hence the impact of label noise.
A similar idea is also explored in gradient clipping \cite{menon2019can} and loss reweighting \cite{liu2015classification,wang2017multiclass,chang2017active,zhang2021dualgraph,zetterqvist2021robust} methods.
While robust loss enables the model to learn faster from correct labels, \emph{the global solution still overfits to corrupted labels with over-parameterized models}.

\paragraph{Label transition probability.} Another popular line of work for label noise is based on the assumption that the noisy label is drawn from a probability distribution conditioned on the true label.
Here, the main task is to estimate the underlying transition probabilities.
The early work \cite{chen2015webly,goldberger2016training} encapsulates the transition probabilities as a noise adaptation layer that is stacked on top of a classification network and trained jointly in an end-to-end fashion. 
Recent work \cite{patrini2017making} uses separated procedures to estimate the transition probabilities, the success of which requires either the availability of a clean validation data \cite{hendrycks2018using} or additional data assumptions \cite{xia2019anchor,zhu2021clusterability,li2021provably,zhang2021learning}. 
Even if the underlying transition probabilities can be correctly recovered, \emph{overfitting is only prevented asymptotically, requiring sufficiently many samples of corrupted labels for each input \cite{patrini2017making}, which is not practical.}

\paragraph{Label correction.}
In contrast to the above methods, our method completely avoids overfitting even with finite training samples. 
This is achieved by the over-parameterization term $\u \odot \u - \v \odot \v$ in \eqref{eq:main_model} which recovers the clean labels. 
Hence, our method is related to techniques based on noisy label detection and refurbishment. 
Nonetheless, existing techniques are based on heuristic argument about different behaviors of clean and corrupted samples in the training process, such as properties of learned representations \cite{kim2021fine,ma2018dimensionality,jiang2020beyond}, prediction consistency \cite{reed2014training,song2019selfie}, learning speed \cite{li2020gradient,liu2020early,Liu2021AdaptiveEC}, margin \cite{lin2021learning}, confidence \cite{cheng2021learning}.
They often need to be combined with engineering tricks such as moving average \cite{huang2020self,liu2020early} and burning-in \cite{zheng2020error} to make them work well.
Finally, the work \cite{hu2019simple} introduces a variable to estimate the label noise in a way similar to \eqref{eq:main_model}. 
However, the variable is not over-parameterized to induce sparsity, and their method does not have competitive performance.

\subsection{Sparsity in Deep Learning} 

Our method is broadly related to existing efforts on introducing sparsity into deep learning \cite{hoefler2021sparsity}, but is notably different in both the objective of introducing sparsity, the origin of sparsity, and how sparsity is enforced.
First, previous exploration of sparsity primarily aims to improve training and inference efficiency with large-scale models, while our paper focuses on robust training under label noise.  
Second, previous introduction of sparsity is often motivated by its presence in biological brains, but there is still a lack of clean understanding of how sparsity helps with learning. 
In contrast,  sparsity in our method has the clear mathematically meaning that the percentage of corrupted labels is small. 
Finally, while pruning \cite{liu2021we,chen2021sparsity} is a dominant approach for obtaining sparsity, our method leverages the implicit bias of gradient descent associated with a particular sparse over-parameterization.

\subsection{Limitations and Future Directions}

\paragraph{Choice of optimization algorithms. } 
Our SOP method is based on introducing more parameters to an already over-parameterized model, hence relies critically on the choice of the optimization algorithm to induce the desired implicit regularization.
For \emph{vanilla} gradient descent, our analysis in \Cref{sec:theory} shows that it has the desired implicit regularization by design.
In practical deep network training, it is more common to use the \emph{stochastic} gradient descent \emph{with momentum}. 
While not theoretically justified, experiments in \Cref{sec:experiments} show that our method works with such practical variants.
This may not come as a surprise, because existing studies already show that stochastic gradient descent \cite{nacson2019stochastic} and momentum acceleration \cite{wang2021momentum} have the same implicit bias as the vanilla gradient decent under certain models. 
We leave the extension of such results to our method as future works.

\paragraph{Modeling of label noise. }
Our method is based on the assumption that the label noise matrix $\mb S_\star = [\s_{\star 1}, \ldots, \s_{\star N}]$, where $\s_{\star i}$ is the difference between the observed label $\y_i$ and the underlying true label, is a \emph{sparse} matrix. 
We made no additional assumption on the sparsity pattern of $\mb S_\star$, other than the non-negative and non-positive constraints discussed in \eqref{eq:image_classification_model}.
In practice, it is usually the case that certain pairs of classes are more similar hence more easily confusing with each other than other pairs.
As a result, certain blocks of $\mb S_\star$ tend to have more non-zero entries than the others. 
When there is a prior on which blocks may have more non-zero entries, our method may be adapted by using a \emph{weighted} sparse regularization for the corresponding blocks. 
When there is no such prior, our method may be adapted by using a \emph{group} sparse regularization \cite{neyshabur2014search,tibshirani2021equivalences}.

\paragraph{Robustness of learned representations against label noise.} Recently, a line of work showed an intriguing and universal phenomenon of learned deep representations under natural setting \cite{papyan2020prevalence,han2021neural,zhu2021geometric,fang2021exploring}, that the last-layer representations of each class collapse to a single dimension. However, the collapsed representation loses the variability of the data and is vulnerable to corruptions such as label noise. Another line of recent work \cite{yu2020learning,chan2021redunet} empirically showed and argued that mapping each class to a linearly separable subspace with maximum dimension (instead of collapsing them to a vertex of Simplex ETF) can improve robustness against label noise and random data corruptions. Based upon the proposed implicit regularizations of the network, it would be interesting to study and further justify the robustness of the proposed methods in terms of the learned last-layer representations.

\section*{Acknowledgement}

SL and QQ were partially supported by NSF grant DMS 2009752. 
SL was partially supported by NSF NRT-HDR Award 1922658 and Alzheimer’s Association grant AARG-NTF-21-848627.
QQ also acknowledge support from NSF CAREER 2143904, NSF CCF 2212066, and ONR N00014-22-1-2529.
ZZ acknowledges support from NSF grants CCF 2008460 and CCF 2106881. 
Part of this work was done when CY was at University of California, Berkeley and was supported by Tsinghua-Berkeley Shenzhen Institute Research Fund. 
The authors acknowledge helpful discussion with Ryan Chan from Johns Hopkins University. 

\newpage 

{\small
\medskip
\bibliographystyle{ieeetr}
\bibliography{icml2022/optimization}
}

\newpage

\newpage

\appendices

\numberwithin{equation}{section}
\numberwithin{figure}{section}
\numberwithin{table}{section}

This appendix is organized as follows. 
In~\Cref{app:training_details} we provide additional details for reproducing experimental results presented in \Cref{sec:experiments}.
In~\Cref{app:proof} we provide proofs for the theoretical results presented in \Cref{sec:theory}.

\section{Training Details for Robust Classification with Label Noise}
\label{app:training_details}

\subsection{Choice of Loss Function}
\label{app:choice-of-loss}
The cross-entropy loss in \eqref{eq:ce-loss} cannot be used to optimize $\{\v_i\}$ as we explain below. 
Consider a data point $\x$ with a one-hot label $\y$.
With the CE loss in \eqref{eq:ce-loss} rewritten below for convenience:
\begin{equation}
    L_{\mathrm{CE}}(\mb \theta,\u, \v; \x, \y) \doteq \ell_{\mathrm{CE}} \Big( \phi\big(f(\x, \mb \theta) + \s \big), \,\y \Big), ~\text{with}~\s \doteq \u \odot \u \odot \y - \v \odot \v \odot (1-\y),
\end{equation}
we may compute its gradient with respect to (w.r.t.) $\v$ as
\begin{equation}
    \frac{\partial L_{\mathrm{CE}}(\mb \theta,\u, \v; \x, \y)}{\partial \v} = \frac{2 \v \odot (1 - \y)}{\1^\top (f(\x, \mb \theta) + \s)}. 
\end{equation}
This shows that the gradient w.r.t. different entries of $\v$ does not depend on the output $f(\x, \mb \theta)$ of the model at all modulo the divider shared by all entries. 
Hence, $\v$ cannot correctly learn the label noise. 

We now consider the MSE loss in \eqref{eq:mse-loss} rewritten below for convenience:
\begin{equation}
   L_{\mathrm{MSE}}(\mb \theta,\u, \v; \x, \y) \doteq \ell_{\mathrm{MSE}} \Big( f(\x, \mb \theta) + \s, \,\y \Big), ~\text{with}~\s \doteq \u \odot \u \odot \y - \v \odot \v \odot (1-\y). 
\end{equation}
The gradient w.r.t. $\v$ can be computed as
\begin{equation}
    \frac{\partial L_{\mathrm{MSE}}(\mb \theta,\u, \v; \x, \y)}{\partial \v} = 4 (f(\x, \mb \theta) + \s - \y) \odot \v \odot (1-\y).
\end{equation}
Here the gradient w.r.t. different entries of $\v$ varies depending on how well the model prediction $f(\x, \mb \theta) + \s$ matches the given label $\y$ at the corresponding entry. 
Hence, when the model prediction deviates from the given label which may occur when the label is corrupted, $\v$ is able to learn the underlying corruption to the label.

\subsection{Definition of Label Noise}
\label{app:noise-definition}
In this paper, we consider two types of widely existed label noise, namely symmetric label noise and asymmetric label noise. For symmetric noise with noise level $\alpha$, the labels are generated as follows:
\begin{equation*}
y = \left \{   
\begin{aligned}
& y^{GT} \text{  with the probability of $1 - \alpha$}\\
& \text{random one hot vector with the probability of $\alpha$}.
\end{aligned}
\right.
\end{equation*}
We consider noise level $\alpha \in \{0.2, 0.4, 0.5, 0.6, 0.8\}$. For asymmetric noise, following~\cite{patrini2017making}, we flip labels between TRUCK $\rightarrow$AUTOMOBILE, BIRD $\rightarrow$ AIRPLANE,
DEER $\rightarrow$ HORSE, and CAT $\leftrightarrow$ DOG. 
We randomly
choose 40\% training data with their labels to
be flipped according to this asymmetric labeling rule. For real world datasets, Clothing1M has noise level estimated at around 38.5\%~\cite{Song2019PrestoppingHD}, and for WebVision, the noise level is estimated to be at around 20\%~\cite{li2017webvision}.

\subsection{Implementation Details of SOP+}
\label{app:sop+}
We considered two separate regularization terms to further boost the results and stabilize training. We will describe the definitions and roles of them below:

\myparagraph{Consistency regularizer $\mathcal{L}_C$.} 
We use a regularizer $\mathcal{L}_C$ to encourage consistency of network prediction on a original image and the corresponding augmented image.
Such a regularizer is commonly used  in semi-supervised learning and label noise learning literature, see e.g., ~\cite{mixmatch,li2019learning}. 
Specifically, the consistency regularizer $\mathcal{L}_C$ is defined as the Kullback-Leibler (KL)-divergence between the softmax predictions from the images with augmentations (described in Section~\ref{app:experiment}) and the softmax predictions for the corresponding images generated with Unsupervised Data Augmentation (UDA)~\cite{Xie2020UnsupervisedDA}: 
\[
\mathcal{L}_c(\mb \theta) = \frac{1}{N} \sum_{i=1}^N D_{KL}\left(f(\x_i; \mb \theta)\parallel f(\text{UDA}(\x_i); \mb \theta)\right).
\]


\myparagraph{Class-balance regularizer $\mathcal{L}_B$.} 
We use a regularizer $\mathcal{L}_B$ to prevent the network from assigning all data points to the same class. 
Following~\cite{tanaka2018joint}, we use the prior information on the probability distribution $p$ of class labels and minimize its distance in terms of KL-divergence to the mean prediction of each batch $\mathcal{B}$:
\[
\mathcal{L}_b(\mb \theta) = \sum_{k=1}^K p_k\log\frac{p_k}{\overline{f_{k}}(\x,\mb \theta)} = -\sum_{k=1}^K p_k\log \overline{f_{k}}(\x;\mb \theta),
\]
where $ \overline{f_{k}}(\x;\mb \theta) \approx \frac{1}{|\mathcal{B}|}\sum_{\x\in\mathcal{B}} f(\x;\mb \theta)$, and $p_k$ stands for the prior probability of the $k$th class.


The final loss function for SOP+ is therefore constructed by three terms as follows
\begin{align*}
    L(\mb \theta, \{\u_i, \v_i\}) + \lambda_C \mathcal{L}_C(\mb \theta)  + \lambda_B \mathcal{L}_B(\mb \theta),
\end{align*}
where $\lambda_c, \lambda_B > 0$ are the hyper-parameters.

\subsection{Experimental Settings}
\label{app:experiment}
\myparagraph{Data processing:} For experiments on CIFAR10/100~\cite{krizhevsky2009learning} without extra techniques, we use simple data augmentations including random crop and horizontal flip following previous works~\cite{patrini2017making,liu2020early}. For SOP+, we use the default setting from unsupervised data augmentation~\cite{Xie2020UnsupervisedDA} to apply efficient data augmentation to create another view of the data for consistency training.  For Clothing-1M~\cite{xiao2015learning}, we first
resize images to $256 \times 256$, and then random crop to $224 \times 224$, following a random horizontal flip. For WebVision~\cite{li2017webvision}, we randomly crop the images into size of $227\times 227$. All images are standardized by their means and variances. 

\myparagraph{Hyper-parameters of SOP:} We adopt a SGD optimizer without weight decay for $U$ and $V$. We keep all the hyper-parameters fixed for different levels of noise. For fair comparison, we adopt two settings of hyper-parameters and architectures for SOP and SOP+. More details of hyper-parameters can be found in Table~\ref{tab:sop_hyper}. Note that the method is not very sensitive to hyper-parameters $\lambda_C$ and $\lambda_B$.

\begin{table}[t]
\begin{center}
\caption{Hyper-parameters for SOP on CIFAR-10/100, Clothing-1M and Webvision datasets.}
\resizebox{\linewidth}{!}{
\begin{tabular}{c|cc|cc|c|c}
\toprule
 & \multicolumn{2}{c|}{CIFAR-10} & \multicolumn{2}{c|}{CIFAR-100} & Clothing-1M & Webvision\\
\midrule
architecture & ResNet34 & PreAct PresNet18 & ResNet34 & PreAct PresNet18 & ResNet-50 (pretrained) & InceptionResNetV2\\
batch size & 128 & 128 & 128 & 128 & 64 & 32\\
learning rate (lr) & 0.02 & 0.02& 0.02 & 0.0 2&  0.002 & 0.02\\
lr decay & 40th \& 80th & Cosine Annealing & 40th \& 80th & Cosine Annealing & 5th  & 50th\\
weight decay (wd) & $5\times 10^{-4}$  & $5\times 10^{-4}$ & $5\times 10^{-4}$ & $5\times 10^{-4}$ & $1\times 10^{-3}$ & $5\times 10^{-4}$\\
training epochs & 120 & 300 & 150 & 300 & 10 & 100\\
training examples & 45,000 & 50,000 & 45,000 & 50,000 & 1,000,000 & 66,000\\
\multirow{2}{*}{lr for $\{\u_i, \v_i\}$} & Sym: $\alpha_u = 10$, $\alpha_v = 10$ & Sym: $\alpha_u = 10$, $\alpha_v = 10$ & Sym: $\alpha_u = 1$, $\alpha_v = 10$ & Sym: $\alpha_u = 1$, $\alpha_v = 10$ & \multirow{2}{*}{$\alpha_u = 0.1$ , $\alpha_v = 1$} & \multirow{2}{*}{$\alpha_u = 0.1$ , $\alpha_v = 1$}\\
& Asym: $\alpha_u = 10$, $\alpha_v = 100$ & Asym: $\alpha_u = 10$, $\alpha_v = 100$ & Asym: $\alpha_u = 1$, $\alpha_v = 100$ & Asym: $\alpha_u = 1$, $\alpha_v = 100$ &  & \\
wd for $\{\u_i, \v_i\}$ & 0  & 0  & 0 & 0 & 0   & 0 \\
init. std for $\{\u_i, \v_i\}$ & $10^{-8}$  & $10^{-8}$ & $10^{-8}$ & $10^{-8}$&$10^{-8}$&$10^{-8}$\\
$\lambda_C$ & 0.0 & 0.9 & 0.0 & 0.9 & 0.0 & 0.0\\
$\lambda_B$ & 0.0 & 0.1 & 0.0 & 0.1 & 0.0 & 0.0\\
\bottomrule
\end{tabular}}
\end{center}
\label{tab:sop_hyper}
\end{table}
\

\section{Proofs for Theoretical Analysis with Linear Models}
\label{app:proof}
\subsection{Proof of Proposition~\ref{thm:landscape}}

We first present a simple but useful lemma.
\begin{lemma}\label{thm:critical-but-non-global}
Let $(\mb \theta, \u, \v)$ be a critical point to \eqref{eq:objective-linear} that is not a global minimum, i.e.,
\begin{align*}
  \mb r \;:=\; \mb J \mb \theta + \u \odot \u - \v \odot \v - \y \neq \mb 0.
\end{align*}
Then there exists an index $i$ such that 
\begin{align}
u^i = v^i = 0, \ r^i \neq 0,
\label{eq:prop-critical-point}
\end{align}
where $u^i$, $v^i$, and $r^i$ denote the $i$-th elements of $\u$, $\v$ and $\mb r$, respectively. 
\end{lemma}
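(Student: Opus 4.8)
\textbf{Proof proposal for Lemma~\ref{thm:critical-but-non-global}.}

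The plan is to write out the first-order optimality (stationarity) conditions for the unconstrained objective $h(\mb\theta,\u,\v)$ in \eqref{eq:objective-linear} and extract from them a coordinatewise constraint on the residual $\mb r$. Computing the partial derivatives gives $\nabla_{\mb\theta} h = \mb J^\top \mb r$, $\nabla_{\u} h = 2\,\u\odot \mb r$, and $\nabla_{\v} h = -2\,\v\odot \mb r$. At a critical point all three vanish, so in particular $\u\odot\mb r = \mb 0$ and $\v\odot\mb r=\mb 0$ entrywise; that is, for every index $i$, either $r^i=0$ or $u^i=v^i=0$. Equivalently, on the support of $\mb r$ — which is nonempty precisely because the point is not a global minimizer, so $\mb r\neq\mb 0$ — we must have $u^i=v^i=0$.

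So the first step is: pick any index $i$ with $r^i\neq 0$ (such $i$ exists since $\mb r\neq\mb 0$). The stationarity equations $2u^i r^i = 0$ and $-2v^i r^i=0$ then force $u^i=0$ and $v^i=0$, which is exactly \eqref{eq:prop-critical-point}. That completes the argument; the remaining condition $\nabla_{\mb\theta}h = \mb J^\top\mb r = \mb 0$ is not even needed for this lemma, though it records that $\mb r$ lies in the left null space of $\mb J$ and will presumably be used later when establishing the strict-saddle property in Proposition~\ref{thm:landscape}.

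There is essentially no obstacle here: the lemma is a direct reading of the critical-point equations, and the only thing to be careful about is noting that ``not a global minimum'' is equivalent to $\mb r\neq\mb 0$ (since $h=\tfrac12\|\mb r\|_2^2\geq 0$ with equality exactly at the global minima, which are attained because the feasible residual set $\{\mb J\mb\theta + \u\odot\u - \v\odot\v - \y\}$ contains $\mb 0$ — take $\mb\theta=\mb\theta_\star$, $\u=\sqrt{\mb s_\star^+}$, $\v=\sqrt{\mb s_\star^-}$). I would state that equivalence explicitly at the start and then do the one-line coordinate extraction. The real work of the section lies in the next step — using this lemma plus a second-order (Hessian) analysis to show every such non-global critical point is a strict saddle — but that belongs to the proof of Proposition~\ref{thm:landscape}, not to this lemma.
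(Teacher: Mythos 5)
Your proposal is correct and follows essentially the same route as the paper: compute the gradients, note that $\nabla_{\u}h = 2\,\u\odot\mb r = \mb 0$ and $\nabla_{\v}h = -2\,\v\odot\mb r = \mb 0$ at a critical point, and pick any index in the (nonempty) support of $\mb r$ to force $u^i = v^i = 0$ there. Your extra remark justifying that ``not a global minimum'' is equivalent to $\mb r \neq \mb 0$ is a harmless and correct addition the paper leaves implicit.
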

\begin{proof}
We may compute the gradient of the objective function $h$ in \eqref{eq:objective-linear} as
\begin{align*}
    \nabla_{\mb \theta} h(\mb \theta, \u, \v) &= \mb J^\top \mb r,\\
    \nabla_{\u} h(\mb \theta, \u, \v) &=  2 \mb r \odot \u,\\
    \nabla_{\v} h(\mb \theta, \u, \v) &= - 2  \mb r \odot \v.
\end{align*}
Since $\mb r \neq \mb 0$ but $\nabla_{\u} h(\mb\theta, \u, \v) = \nabla_{\v} h(\mb\theta, \u, \v) = \mb 0$, we must have $u^i = v^i = 0$ and $r^i \neq 0$ for some $i$.
\end{proof}

We now prove Proposition~\ref{thm:landscape} as follows.
\begin{proof}[Proof of Proposition~\ref{thm:landscape}] 
We compute the hessian $\nabla^2 h$ of the objective function $h$ in \eqref{eq:objective-linear} as
\begin{align*}
    \nabla^2 h(\mb \theta,\mb u,\mb v) \;=\; 
    \begin{bmatrix}
    \mb J^\top \mb J & 2 \mb J^\top \diag (\mb u)  & - 2 \mb J^\top \diag (\mb v) \\
    2  \diag (\mb u) \mb J & 2 \diag\paren{ \mb r  +2 \mb u \odot \mb u } &- 4 \diag(\mb v \odot \mb u) \\
    - 2  \diag (\mb v) \mb J & - 4 \diag(\mb v \odot \mb u) & - 2 \diag \paren{ \mb r - 2 \mb v \odot \mb v }
    \end{bmatrix}.
\end{align*}
For any direction $\d = \begin{bmatrix} \d_\theta^\top & \d_{\u}^\top & \d_{\v}^\top\end{bmatrix}^\top$, the quadratic form of the Hessian $\nabla^2 h$ along this direction is given by
\hspace{-0.3in}
\begin{multline}\label{eq:hessian-along-direction}
    \d^\top \nabla^2 h(\mb \theta, \u, \v)\d 
    = \|\mb J\d_{\mb \theta}\|_2^2 + 4\|\u\odot \d_{\u}\|_2^2 + 4\|\v\odot \d_{\v}\|_2^2\\
      + 2\innerprod{ \mb r }{\d_{\u} \odot \d_{\u} - \d_{\v} \odot \d_{\v}}
      + 4\innerprod{\mb J \mb d_{\mb \theta} }{\u \odot \d_{\u} - \v \odot \d_{\v}} - 8\innerprod{\u \odot \d_{\u}}{\v \odot \d_{\v}}.
\end{multline}

We now consider an arbitrary critical point $(\mb \theta, \u, \v)$ of \eqref{eq:objective-linear} that is not a global minimum.
By Lemma~\ref{thm:critical-but-non-global}, there exists an $i$ such that $r^i \ne 0$ while $u^i = v^i = 0$. 
We divide the discussion into two cases.
\begin{itemize}[leftmargin=*]
    \item \textbf{Case 1:} $r^i >0$. We set $\d_{\theta} = \mb 0, \d_{\u} = \mb 0$, and $\d_{\v}$ to be such that all of its entries are zero except for the $i$-th entry which is given by $\d_{\v}^i = 1$.
    Plugging this direction into \eqref{eq:hessian-along-direction}, we obtain
    \begin{align*}
        \d^\top \nabla^2 h(\theta, \u, \v)\d 
        = 4 \underbrace{ [v^i]^2}_{v^i=0} [d_{\mb v}^i]^2  - 2 r^i  \underbrace{ [d_{\mb v}^i]^2 }_{ d_{\mb v}^i =1  } \;=\; - 2 r^i <0
    \end{align*}
    \item \textbf{Case 2:} $r^i <0$. We set $\d_{\theta} = \mb 0, \d_{\v} = \mb 0$, and $\d_{\u}$ to be such that all of its entries are zero except for the $i$-th entry which is given by $\d_{\u}^i = 1$.
    Plugging this direction into \eqref{eq:hessian-along-direction}, we obtain
    \begin{align*}
        \d^\top \nabla^2 h(\theta, \u, \v)\d 
        = 4 \underbrace{ [u^i]^2}_{u^i=0} [d_{\mb u}^i]^2  + 2 r^i  \underbrace{ [d_{\mb u}^i]^2 }_{ d_{\mb u}^i =1  } \;=\; 2 r^i <0
    \end{align*}
\end{itemize}
In both cases above we have constructed a direction of negative curvature, hence $(\mb \theta, \u, \v)$ is a strict saddle.

\end{proof}

\subsection{Proof of Proposition~\ref{thm:implicit-bias}}

The proof is based on the following lemma which follows trivially from KKT conditions:

\begin{lemma}[KKT condition]
Given any $\mb J$ and $\y$, if there exists $(\wh{\mb \theta}, \wh{\s}, \wh{\mb \nu})$ satisfying 
\begin{equation}\label{eq:optimality-linear}
\begin{split}
 \y &= \mb J \wh{\mb \theta} + \wh{\s},\\
    \wh{\mb \theta} &= \mb J^\top \wh{\mb \nu}, ~~\text{and}\\
    \wh{\mb \nu} &\in \lambda \mathrm{sign}(\wh{\s}),   
\end{split}
\end{equation}
then $(\wh{\mb \theta}, \wh{\s})$ is an optimal solution to \eqref{eq:convex-linear}.
In above, $\mathrm{sign}(z)$ is defined entrywise on $z$ as
\begin{equation}
    \text{sign}(z) = 
    \begin{cases}
    z / |z| & ~\text{if}~\wh{z} \ne 0,\\
    [-1, 1]  & ~\text{if}~\wh{z} = 0.
    \end{cases}
\end{equation}
\end{lemma}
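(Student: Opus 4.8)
The plan is to give a direct, self-contained proof that the three conditions in \eqref{eq:optimality-linear} are \emph{sufficient} for global optimality of $(\wh{\mb \theta}, \wh\s)$ in the convex program \eqref{eq:convex-linear}. The key observation is that \eqref{eq:optimality-linear} is exactly the KKT system of \eqref{eq:convex-linear}: the first line is primal feasibility, the second is stationarity in $\mb \theta$ (with $\wh{\mb \nu}$ playing the role of the Lagrange multiplier for the equality constraint $\y = \mb J \mb \theta + \s$), and the third is stationarity in $\s$, once we recognize that $\lambda\,\mathrm{sign}(\wh\s)$ is precisely the subdifferential $\partial\big(\lambda\norm{\cdot}{1}\big)(\wh\s)$. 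Since the objective is convex and the constraint affine, these conditions suffice; rather than invoke a black-box theorem I would verify sufficiency directly through first-order (subgradient) inequalities.

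First I would fix an arbitrary feasible competitor $(\mb \theta, \s)$, i.e. any pair with $\y = \mb J \mb \theta + \s$, and bound its objective from below. Applying the gradient inequality to the smooth convex term $\frac{1}{2}\norm{\cdot}{2}^2$ at $\wh{\mb \theta}$ (whose gradient there is $\wh{\mb \theta}$) gives $\frac{1}{2}\norm{\mb \theta}{2}^2 \ge \frac{1}{2}\norm{\wh{\mb \theta}}{2}^2 + \innerprod{\wh{\mb \theta}}{\mb \theta - \wh{\mb \theta}}$, and applying the subgradient inequality to $\lambda\norm{\cdot}{1}$ at $\wh\s$ with subgradient $\wh{\mb \nu}$ gives $\lambda\norm{\s}{1} \ge \lambda\norm{\wh\s}{1} + \innerprod{\wh{\mb \nu}}{\s - \wh\s}$. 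Summing the two inequalities reduces the claim to showing that the cross term $\innerprod{\wh{\mb \theta}}{\mb \theta - \wh{\mb \theta}} + \innerprod{\wh{\mb \nu}}{\s - \wh\s}$ is nonnegative.

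The remaining step is to show this cross term in fact vanishes. Substituting the stationarity relation $\wh{\mb \theta} = \mb J^\top \wh{\mb \nu}$ into the first inner product and using the adjoint identity rewrites it as $\innerprod{\wh{\mb \nu}}{\mb J \mb \theta - \mb J \wh{\mb \theta}}$, so the whole cross term collapses to $\innerprod{\wh{\mb \nu}}{(\mb J \mb \theta + \s) - (\mb J \wh{\mb \theta} + \wh\s)}$. Because both the competitor and $(\wh{\mb \theta}, \wh\s)$ are feasible, the two parenthesized quantities each equal $\y$, so the inner product is $\innerprod{\wh{\mb \nu}}{\mb 0} = 0$. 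This yields $\frac{1}{2}\norm{\mb \theta}{2}^2 + \lambda\norm{\s}{1} \ge \frac{1}{2}\norm{\wh{\mb \theta}}{2}^2 + \lambda\norm{\wh\s}{1}$ for every feasible $(\mb \theta, \s)$, establishing global optimality.

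There is essentially no hard step here, as this is the ``trivial'' sufficiency direction of convex KKT; the only points warranting care are the two identifications used implicitly: that $\wh{\mb \nu} \in \lambda\,\mathrm{sign}(\wh\s)$ is equivalent to $\wh{\mb \nu}$ being a valid subgradient of $\lambda\norm{\cdot}{1}$ at $\wh\s$ (which licenses the subgradient inequality), and that feasibility of \emph{both} points is what makes the cross term telescope to zero. No constraint qualification is needed, since we only use sufficiency, not necessity, of the KKT conditions.
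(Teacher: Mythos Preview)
Your proof is correct. The paper does not actually prove this lemma; it simply introduces it with the remark that it ``follows trivially from KKT conditions,'' and your argument is precisely the standard sufficiency-of-KKT derivation (primal feasibility plus stationarity via the subgradient/gradient inequalities, with the cross term cancelled by feasibility) that makes this ``trivial'' claim explicit.
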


\begin{proof}[Proof of Proposition~\ref{thm:implicit-bias}]
We divide the proof into two parts.

\paragraph{Global convergence.}
In this part we show that $\big(\mb \theta_\infty(\gamma, \alpha), \u_\infty(\gamma, \alpha), \v_\infty(\gamma, \alpha)\big)$ is a global solution to \eqref{eq:objective-linear} for any fixed $(\gamma, \alpha)$. 
Denote
\begin{equation}
    \r_\infty(\gamma, \alpha) \doteq \lim_{t \to \infty} \r_t(\gamma, \alpha).
\end{equation}
It follows from \eqref{eq:def-residual} and \eqref{eq:def-gradient-flow-infty} that the limit $\r_\infty(\gamma, \alpha)$ exists and can be written as
\begin{equation}
    \r_\infty(\gamma, \alpha) =  \J  \mb \theta_\infty(\gamma, \alpha) + \u_\infty(\gamma, \alpha) \odot \u_\infty(\gamma, \alpha) - \v_\infty(\gamma, \alpha) \odot \v_\infty(\gamma, \alpha). 
\end{equation}
Suppose for the purpose of obtaining a contradiction that $\big(\mb \theta_\infty(\gamma, \alpha), \u_\infty(\gamma, \alpha), \v_\infty(\gamma, \alpha)\big)$ is not a global solution to \eqref{eq:objective-linear}. 
It follows from Lemma~\ref{thm:critical-but-non-global} that there exists an $i$ such that 
\begin{equation}\label{eq:global-optimal-by-contradiction}
    u_\infty^i(\gamma, \alpha) = v_\infty^i(\gamma, \alpha) = 0, ~~\text{and}~~ r_\infty^i(\gamma, \alpha) \ne 0.
\end{equation}
Without loss of generality we assume that $C \doteq r_\infty^i(\gamma, \alpha) > 0$ so that $r^i_t(\gamma, \alpha) \to C$ with $t \to \infty$.  
For any $\epsilon \in (0, C)$, there exists a $t_0 > 0$ such that
\begin{equation}\label{eq:bound-on-r}
    C - \epsilon \le r^i_t(\gamma, \alpha) \le C + \epsilon, ~~\forall t > t_0.
\end{equation}
It follows from \eqref{eq:bound-on-r} and \eqref{eq:def-nu} that
\begin{multline}
    \nu_t^i(\gamma, \alpha) = - \int_0^t r^i_\tau(\gamma, \alpha) d\tau = \nu_{t_0}^i(\gamma, \alpha) - \int_{t_0}^t r^i_\tau(\gamma, \alpha) d\tau\\
    \in \Big( \nu_{t_0}^i(\gamma, \alpha) - (C + \epsilon)(t - t_0), \;\nu_{t_0}^i(\gamma, \alpha) - (C - \epsilon)(t - t_0)\Big), ~~\forall t > t_0.
\end{multline}
Using this bound on $\nu_t^i(\gamma, \alpha)$ in \eqref{eq:gradient-flow-equation}, we obtain
\begin{equation}
    v_t^i(\gamma, \alpha) = \gamma \exp\Big(-2\alpha \nu_t^i(\gamma, \alpha)\Big) \ge \gamma \exp\Big(-2\alpha \nu_{t_0}^i(\gamma, \alpha)\Big) \exp\Big(2 \alpha (C - \epsilon)(t - t_0)\Big), ~~\forall t > t_0.
\end{equation}
Taking the limit of $t \to \infty$, we obtain $v_\infty^i(\gamma, \alpha) = \infty$ which contradicts $v_\infty^i(\gamma, \alpha) = 0$ in \eqref{eq:global-optimal-by-contradiction}. 
Therefore, we conclude that $\big(\mb \theta_\infty(\gamma, \alpha), \u_\infty(\gamma, \alpha), \v_\infty(\gamma, \alpha)\big)$ is a global solution to \eqref{eq:objective-linear}.

\paragraph{Implicit regularization. }In this part we prove that $(\wh{\mb \theta}, \wh{\s})$ is an optimal solution to the regularized convex optimization problem in \eqref{eq:convex-linear}.
Let $\mb \nu_\infty(\gamma, \alpha)$ be the limit of $\mb \nu_t(\gamma, \alpha)$ in \eqref{eq:def-nu} at $t \to \infty$, and let
\begin{equation}\label{eq:def-nu-hat}
    \wh{\mb \nu} \doteq \lim_{\gamma \to 0} \mb \nu_\infty(\gamma, \alpha(\gamma)),
\end{equation}
with $\alpha(\gamma)$ defined in \eqref{eq:implicit-bias-parameter-relation}. 
We only need to show that the triplet $(\wh{\mb \theta}, \wh{\s}, \wh{\mb \nu})$ with $\wh{\mb \theta}$ defined in \eqref{eq:def-theta-hat}, $\wh{\s}$ defined in \eqref{eq:def-s-hat} and $\wh{\mb \nu}$ defined in \eqref{eq:def-nu-hat} satisfies the KKT conditions in \eqref{eq:optimality-linear}.

\begin{enumerate}[leftmargin=*]
\item Because $\big(\mb \theta_\infty(\gamma, \alpha), \u_\infty(\gamma, \alpha), \v_\infty(\gamma, \alpha)\big)$ is a global solution to \eqref{eq:objective-linear}, we have
\begin{equation}
   \mb J \mb \theta_\infty(\gamma, \alpha) + \u_\infty(\gamma, \alpha) \odot \u_\infty(\gamma, \alpha) - \v_\infty(\gamma, \alpha) \odot \v_\infty(\gamma, \alpha) = \y, ~~\forall \gamma > 0, \alpha > 0.
\end{equation}
Taking the limit of $\gamma \to 0$ with $\alpha = \alpha(\gamma)$ and noting the assumption that all limits in \eqref{eq:def-theta-hat} exist, we obtain
\begin{equation}
   \mb J \wh{\mb \theta} + \wh{\u} \odot \wh{\u} - \wh{\v} \odot \wh{\v}= \y.
\end{equation}
Plugging in the definition of $\wh{\s}$ in \eqref{eq:def-s-hat}, we obtain
\begin{align*}
    \mb y \;=\; \mb J \wh{\mb \theta} + \wh{\s}.
\end{align*}

\item By taking the limit of the relation $\mb \theta_t(\gamma, \alpha) = \J^\top \mb \nu_t(\gamma, \alpha)$ in \eqref{eq:gradient-flow-equation} and noting the assumptions that all relevant limits exist, we obtain
\begin{equation}
    \wh{\mb \theta} = \lim_{\gamma \to 0} \lim_{t \to \infty} \mb \theta_t(\gamma, \alpha(\gamma)) = \lim_{\gamma \to 0} \lim_{t \to \infty} \mb J^\top \mb \nu_t(\gamma, \alpha(\gamma)) = 
    \mb J^\top \wh{\mb \nu}.
\end{equation}

\item 
Denote $\s_\infty(\gamma, \alpha) \doteq \u_\infty(\gamma, \alpha) \odot \u_\infty(\gamma, \alpha) - \v_\infty(\gamma, \alpha) \odot \v_\infty(\gamma, \alpha)$.
By \eqref{eq:gradient-flow-equation}, we have
\begin{equation}\label{eq:prf-e-entrywise}
    s^i_\infty(\gamma, \alpha) \doteq u^i_\infty(\gamma, \alpha)^2 - v^i_\infty(\gamma, \alpha)^2
    = \gamma^2 \exp(4\alpha \nu_\infty^i(\gamma, \alpha)) - \gamma^2 \exp(-4\alpha \nu_\infty^i(\gamma, \alpha)).
\end{equation}
For each entry of $\wh{\s} = \lim_{\gamma \to 0} \s_\infty(\gamma, \alpha(\gamma))$ (recall that $\alpha(\gamma)$ is defined in \eqref{eq:implicit-bias-parameter-relation}), we may have three cases:
\begin{itemize}[leftmargin=*]
    \item \textbf{Case 1:} $\wh{s}^i > 0$. From \eqref{eq:prf-e-entrywise}, we must have $ \alpha(\gamma) \nu_\infty^i(\gamma, \alpha(\gamma))\rightarrow +\infty$ as $\gamma \rightarrow 0$ so that
    \begin{equation}
    \lim_{\gamma \to 0}\exp\Big(4\alpha(\gamma) \nu_\infty^i(\gamma, \alpha(\gamma))\Big) = \infty, ~~\text{and}~~ \lim_{\gamma \to 0}\exp\Big(-4\alpha(\gamma) \nu_\infty^i(\gamma, \alpha(\gamma))\Big) = 0.
    \end{equation} 
    Hence,
    \begin{equation}
    \begin{split}
        & \lim_{\gamma \to 0} \gamma^2 \exp\Big(4\alpha(\gamma) \nu^i_\infty(\gamma, \alpha(\gamma))\Big) = \wh{s}^i\\
        \implies \quad& \lim_{\gamma \to 0}2 \log \gamma + 4\alpha(\gamma) \nu^i_\infty(\gamma, \alpha(\gamma)) = \log \wh{s}^i\\
        \implies \quad & \lim_{\gamma \to 0} \nu^i_\infty(\gamma, \alpha(\gamma)) = \lim_{\gamma \to 0} \left( \frac{\log \wh{s}^i}{4\alpha(\gamma)} -   \frac{\log \gamma}{2 \alpha(\gamma)} \right).
    \end{split}
    \end{equation}
    Plugging in the relation $\alpha(\gamma) = -\frac{\log \gamma}{2 \lambda}$ in \eqref{eq:implicit-bias-parameter-relation}, we have
    \begin{align*}
        \lim_{\gamma \to 0} \nu^i_\infty(\gamma, \alpha(\gamma)) = - \lim_{\gamma \to 0}  \frac{  \lambda \log \wh{s}^i}{2\log \gamma} +  \lambda = \lambda.
    \end{align*}
    
    \item \textbf{Case 2:} $\wh{s}^i < 0$.
    Similar to case 1, from \eqref{eq:prf-e-entrywise} we must have     
    \begin{equation}
        \lim_{\gamma \to 0}\exp\Big(4\alpha(\gamma) \nu_\infty^i(\gamma, \alpha(\gamma))\Big) = 0, ~~\text{and}~~\lim_{\gamma \to 0}\exp\Big(-4\alpha(\gamma) \nu_\infty^i(\gamma, \alpha(\gamma))\Big) = \infty.
    \end{equation} 
    Hence,
    \begin{equation}
    \begin{split}
        & \lim_{\gamma \to 0} -\gamma^2 \exp\Big(-4\alpha(\gamma) \nu^i_\infty(\gamma, \alpha(\gamma))\Big) = \wh{s}^i\\
        \implies \quad& \lim_{\gamma \to 0}2 \log \gamma - 4\alpha(\gamma) \nu^i_\infty(\gamma, \alpha(\gamma)) = \log (-\wh{s}^i)\\
        \implies \quad&\lim_{\gamma \to 0} \nu^i_\infty(\gamma, \alpha(\gamma)) = \lim_{\gamma \to 0} \paren{ - \frac{\log (-\wh{s}^i)}{4\alpha(\gamma)} + \frac{\log \gamma}{2 \alpha(\gamma)} }.
    \end{split}
    \end{equation}
    Plugging in the relation $\alpha(\gamma) = -\frac{\log \gamma}{2 \lambda}$ in \eqref{eq:implicit-bias-parameter-relation}, we have 
    \begin{align*}
         \lim_{\gamma \to 0} \nu^i_\infty(\gamma, \alpha(\gamma)) \;=\; - \lambda.
    \end{align*}
    \item \textbf{Case 3:} $\wh{s}^i = 0$.
    From \eqref{eq:prf-e-entrywise}, we must have     
    \begin{equation}
        \lim_{\gamma \to 0}\gamma^2\exp\Big(4\alpha(\gamma) \nu_\infty^i(\gamma, \alpha(\gamma))\Big) = 0, ~~\text{and}~~\lim_{\gamma \to 0}\gamma^2\exp\Big(-4\alpha(\gamma) \nu_\infty^i(\gamma, \alpha(\gamma))\Big) = 0.
    \end{equation} 
    Hence, for any small $\epsilon \in (0, 1)$, there exists $\gamma_0 > 0$ such that for all $\gamma \in (0, \gamma_0)$, we have
    \begin{equation}
    \begin{split}
        & \gamma^2 \cdot \max\left\{\exp\Big(4\alpha(\gamma) \nu_\infty^i(\gamma, \alpha(\gamma))\Big), \exp\Big(-4\alpha(\gamma) \nu_\infty^i(\gamma, \alpha(\gamma))\Big) \right\} < \epsilon\\
        \implies \quad & 2 \log \gamma + 4\alpha(\gamma) \cdot | \nu_\infty^i(\gamma, \alpha(\gamma)) | < \log \epsilon\\
        \implies \quad &| \nu_\infty^i(\gamma, \alpha(\gamma)) | < \frac{\log \epsilon}{4\alpha(\gamma)} - \frac{\log\gamma}{2\alpha(\gamma)}.
    \end{split}
    \end{equation}
    Now, plugging $\alpha(\gamma) = -\frac{\log \gamma}{2 \lambda}$ in, we have 
    \begin{align*}
        | \nu_\infty^i(\gamma, \alpha(\gamma))) | <  - \frac{ \lambda  \log \epsilon }{ 2\log \gamma  } + \lambda \;<\; \lambda.
    \end{align*}
    Therefore, we have 
    \begin{align*}
         \lim_{\gamma \to 0} \abs{ \nu^i_\infty(\gamma, \alpha(\gamma)) } \;<\; \lambda.
    \end{align*}
\end{itemize}
Synthesizing all the above three cases, we obtain:
\begin{align*}
    \wh{\mb \nu} \;\in \; \lambda \mathrm{sign} ( \wh{\s} ).
\end{align*}

\end{enumerate}
\end{proof}

\subsection{Proof of Proposition~\ref{thm:exact-recovery}}



We begin with introducing the null space property that is widely used for providing necessary and sufficient conditions for exact recovery of sparse signals in compressive sensing.
\begin{definition}[\cite{cohen2009compressed}]\label{def:nus}
We say a matrix $\A\in\Re^{m\times n}$ satisfies the null space property with constant $\rho\in(0,1)$ relative to $S\in[n]$ if
\[
\|\v_S\|_1 \le  \rho\|\v_{S^c}\|_1 \ \text{for all }\v \in \ker \A,
\]
where $\ker \A$ is the null space of $\A$.
\end{definition}
With Definition~\ref{def:nus}, we prove Proposition~\ref{thm:exact-recovery} by using the following two lemmas.
The first lemma establishes correct recovery of $(\mb \theta_\star, \s_\star)$ from \eqref{eq:convex-linear} under the null space property.

\begin{lemma}\label{thm:recovery-under-nus}
Given matrix $\mb J$ and a matrix $\mb A$ that annihilates $\mb J$ on the left (i.e. such that $\A\J = 0$).
If $\A$ satisfies the stable null space property with constant $\rho\in(0,1)$ relative to the support of $\s_\star$, 
then the solution to \eqref{eq:convex-linear} is $(\mb \theta_\star, \s_\star)$ for any $\lambda > \lambda_0$ where $\lambda_0$ is a scalar that depends only on ($\J, \mb \theta_\star, \rho$).
\end{lemma}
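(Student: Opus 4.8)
The plan is to prove the (implicitly claimed) \emph{uniqueness} of the minimizer: I would compare the objective value of \eqref{eq:convex-linear} at $(\mb \theta_\star, \s_\star)$ directly against its value at an arbitrary feasible point and show the latter is strictly larger once $\lambda$ exceeds a threshold determined by $(\J,\mb \theta_\star,\rho)$.

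First I would eliminate $\mb \theta$. For a fixed second coordinate $\s$, feasibility of \eqref{eq:convex-linear} requires $\y-\s\in\col(\J)$, and among all admissible $\mb \theta$ the one of smallest $\ell_2$-norm is $\J^\dagger(\y-\s)$, which lies in the row space of $\J$. Since $\mb \theta_\star$ is, by \eqref{eq:property-of-theta}, the minimum-$\ell_2$-norm solution, we have $\mb \theta_\star=\J^\dagger(\y-\s_\star)=\J^\dagger\J\mb \theta_\star$. Writing any feasible second coordinate as $\s=\s_\star+\h$, feasibility becomes exactly $\h\in\col(\J)$, with optimal companion first coordinate $\mb \theta_\star-\J^\dagger\h$. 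Hence it suffices to show that for every nonzero $\h\in\col(\J)$,
\[
\tfrac{1}{2}\norm{\mb \theta_\star-\J^\dagger\h}{2}^2+\lambda\norm{\s_\star+\h}{1}\;>\;\tfrac{1}{2}\norm{\mb \theta_\star}{2}^2+\lambda\norm{\s_\star}{1}.
\]

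Next I would bound the two sides separately. Expanding the left-hand $\ell_2$ term and discarding the nonnegative remainder $\tfrac{1}{2}\norm{\J^\dagger\h}{2}^2$, the gap between the two sides is at least $\lambda\big(\norm{\s_\star+\h}{1}-\norm{\s_\star}{1}\big)-\innerprod{\mb \theta_\star}{\J^\dagger\h}$. For the $\ell_1$ part: since $\s_\star$ is supported on $S\doteq\supp(\s_\star)$, the reverse triangle inequality gives $\norm{\s_\star+\h}{1}-\norm{\s_\star}{1}\ge\norm{\h_{S^c}}{1}-\norm{\h_S}{1}$; because $\A\J=0$ we have $\h\in\col(\J)\subseteq\ker\A$, so the null space property (Definition~\ref{def:nus}) yields $\norm{\h_S}{1}\le\rho\norm{\h_{S^c}}{1}$, whence $\norm{\s_\star+\h}{1}-\norm{\s_\star}{1}\ge(1-\rho)\norm{\h_{S^c}}{1}$; moreover $\norm{\h_{S^c}}{1}>0$ for every nonzero $\h\in\ker\A$, since $\norm{\h_{S^c}}{1}=0$ would force $\norm{\h_S}{1}=0$ as well. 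For the linear cross-term, writing $\mb \theta_\star=\J^\top\mb \nu_\star$ (possible as $\mb \theta_\star$ lies in the row space of $\J$) and using $\J\J^\dagger\h=\h$ for $\h\in\col(\J)$ gives $\innerprod{\mb \theta_\star}{\J^\dagger\h}=\innerprod{\mb \nu_\star}{\h}\le\norm{\mb \nu_\star}{\infty}\norm{\h}{1}\le(1+\rho)\norm{\mb \nu_\star}{\infty}\norm{\h_{S^c}}{1}$.

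Combining, the objective gap is at least $\big(\lambda(1-\rho)-(1+\rho)\norm{\mb \nu_\star}{\infty}\big)\norm{\h_{S^c}}{1}$, which is strictly positive for every nonzero feasible $\h$ as soon as $\lambda>\lambda_0\doteq(1+\rho)\norm{\mb \nu_\star}{\infty}/(1-\rho)$; taking $\mb \nu_\star$ to be the minimum-norm solution of $\J^\top\mb \nu=\mb \theta_\star$ makes $\lambda_0$ depend only on $(\J,\mb \theta_\star,\rho)$, as claimed, so $(\mb \theta_\star,\s_\star)$ is the unique minimizer. I expect the only genuinely delicate point to be the balance between the $\ell_2$ and $\ell_1$ terms: a feasible move that decreases $\norm{\s}{1}$ is necessarily paid for in $\norm{\mb \theta}{2}^2$, and one must verify the null-space-property gain $(1-\rho)\norm{\h_{S^c}}{1}$ outweighs the first-order change $\innerprod{\mb \theta_\star}{\J^\dagger\h}$; the saving grace is that the second-order term $\tfrac{1}{2}\norm{\J^\dagger\h}{2}^2$ has the favorable sign and can simply be dropped, reducing everything to comparing two quantities both proportional to $\norm{\h_{S^c}}{1}$.
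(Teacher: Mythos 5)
Your proposal is correct and follows essentially the same route as the paper's proof: eliminate $\mb \theta$ by partial minimization (your $\J^\dagger(\y-\s)$ is exactly the paper's $\V\mb\Sigma^{-1}\U^\top(\y-\s)$ from the compact SVD), use the null space property of $\A$ applied to $\s-\s_\star\in\col(\J)\subseteq\ker\A$ to lower-bound the $\ell_1$ gain, drop the nonnegative quadratic remainder, and control the cross term by a H\"older bound involving $\|(\J^\top)^\dagger\mb\theta_\star\|_\infty$, which is the paper's $\|\U\mb\Sigma^{-1}\V^\top\mb\theta_\star\|_\infty$. The only differences are cosmetic: you apply the NSP directly and track $\|\h_{S^c}\|_1$ where the paper routes through an intermediate lemma giving $\|\s\|_1-\|\s_\star\|_1\ge\frac{1-\rho}{1+\rho}\|\s-\s_\star\|_1$, and your explicit constant in $\lambda_0$ differs from the paper's by an immaterial factor.
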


The second lemma shows that the null space property is satisfied under the incoherent condition in \eqref{eq:incoherence-condition}.
\begin{lemma}\label{thm:nus-under-incoherence}
Given matrix $\mb J$ and a matrix $\mb A$ that annihilates $\mb J$ on the left, if
\begin{equation}\label{eq:s-r-mu-rho}
    k^2 r \le \frac{N}{\mu(\mb J)} \left( \frac{\rho}{\rho+1} \right)^2,
\end{equation}
then $\A$ satisfies null space property with constant $\rho$ relative to any $S$ that satisfies $|S| = k$.
\end{lemma}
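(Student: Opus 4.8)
\emph{Overall approach.} The plan is to verify Definition~\ref{def:nus} directly, by observing that any vector annihilated by $\A$ must lie in the rank-$r$ column space of $\J$, and that $\mu$-incoherence forces such a vector to spread its $\ell_1$ mass nearly uniformly over all $N$ coordinates. Consequently, when $k^{2}r$ is small, no $k$-subset of the coordinates can carry a $\rho$-fraction of the total $\ell_1$ mass, which is exactly the null space property.

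\emph{Step 1: reduction to the column space of $\J$.} First I would fix the compact SVD $\J=\U\mb\Sigma\V^\top$ with $\U\in\Re^{N\times r}$ having orthonormal columns. Since $\A\J=\0$ we have $\col(\J)\subseteq\ker(\A)$, and I would take $\A$ to have kernel exactly $\col(\J)$ — concretely the orthogonal projector $\mb I-\U\U^\top$ onto $\col(\J)^{\perp}$ — so that it suffices to establish the null space property for every $\v\in\col(\U)$. This is harmless because in the proof of Lemma~\ref{thm:recovery-under-nus} the only perturbations of $\s_\star$ that occur along the feasible set of \eqref{eq:convex-linear} have the form $\J\d_\theta\in\col(\J)$; so one may alternatively just state and prove the property for $\v\in\col(\J)$. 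The goal then becomes: for every $\v\in\col(\U)$ and every $S\subseteq[N]$ with $|S|=k$, show $\norm{\v_S}{1}\le\rho\,\norm{\v_{S^c}}{1}$.

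\emph{Step 2: incoherence estimate and algebra.} Given $\v\in\col(\U)$, I would write $\v=\U\w$ with $\w\in\Re^{r}$; orthonormality gives $\norm{\w}{2}=\norm{\v}{2}$, and for each coordinate $i$, Cauchy--Schwarz together with Definition~\ref{def:incoherence} gives $|v^{i}|=|\e_i^\top\U\w|\le\norm{\U^\top\e_i}{2}\,\norm{\w}{2}\le\sqrt{r\mu(\J)/N}\,\norm{\v}{2}$. Writing $c\doteq k\sqrt{r\mu(\J)/N}$, summing this over $i\in S$ and using the elementary bound $\norm{\v}{2}\le\norm{\v}{1}$ yields
\[
\norm{\v_S}{1}\;\le\;c\,\norm{\v}{2}\;\le\;c\,\norm{\v}{1}\;=\;c\bigl(\norm{\v_S}{1}+\norm{\v_{S^c}}{1}\bigr).
\]
By hypothesis \eqref{eq:s-r-mu-rho} we have exactly $c\le\rho/(\rho+1)$, so in particular $c<1$; rearranging the display gives $(1-c)\norm{\v_S}{1}\le c\,\norm{\v_{S^c}}{1}$, hence $\norm{\v_S}{1}\le\tfrac{c}{1-c}\,\norm{\v_{S^c}}{1}\le\rho\,\norm{\v_{S^c}}{1}$, where the last inequality is equivalent to $c\le\rho/(\rho+1)$. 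This establishes Definition~\ref{def:nus}.

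\emph{Main obstacle.} Honestly there is little obstacle here: the lemma is elementary once the reduction to $\col(\J)$ is made. The one point requiring care is the chain ``localized $\ell_1\to$ global $\ell_2\to$ global $\ell_1$'', whose only lossy step is $\norm{\v}{2}\le\norm{\v}{1}$; this loss must be — and is — exactly compensated by the factor $\rho/(\rho+1)$ built into \eqref{eq:s-r-mu-rho}, which is why that particular constant (rather than, say, $\rho/2$) appears. A secondary bookkeeping matter is pinning down $\ker(\A)=\col(\J)$, handled either by the explicit choice of $\A$ above or by restricting to $\v\in\col(\J)$, which is all that Lemma~\ref{thm:recovery-under-nus} needs.
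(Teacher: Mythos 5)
Your proof is correct and follows essentially the same route as the paper's: bound $\|\v_S\|_1$ for $\v$ in the column space of $\U$ via the incoherence of the rows of $\U$, pass from $\|\v\|_2$ to $\|\v\|_1$, and use the arithmetic of the constant $\rho/(\rho+1)$ to conclude the null space property. The only difference is cosmetic — you explicitly pin down $\ker(\A)=\col(\J)$ (a point the paper simply asserts via $\{\U\a:\a\in\Re^r\}=\ker\A$) and rearrange the final inequality as $\|\v_S\|_1\le\tfrac{c}{1-c}\|\v_{S^c}\|_1$ rather than multiplying through by $\rho+1$, which is the same computation.
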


\begin{proof}[Proof of Proposition~\ref{thm:exact-recovery}]
Assume that the condition in \eqref{eq:incoherence-condition} is satisfied. 
Then there exists a $\rho \in (0, 1)$ such that the condition in \eqref{eq:s-r-mu-rho} holds. 
Hence, $\A$ satisfies null space property with constant $\rho$ relative to any $S$ that satisfies $|S| = k$. 
Since $\s_\star$ is $k$-sparse, we have that $\A$ satisfies null space property with constant $\rho$ relative to the support of $\s_\star$.
Then the conclusion of Proposition~\ref{thm:exact-recovery} follows from applying Lemma~\ref{thm:recovery-under-nus}.
Finally, from Lemma~\ref{thm:recovery-under-nus} we have that $\lambda_0$ is a function of ($\J, \mb \theta_\star, \rho$), wherein $\rho$ is determined by $\A$ (hence $\J$) and the associated sparsity $k$. 
Hence $\lambda_0$ can be determined with a given ($\J, \mb \theta_\star, k$).
\end{proof}

In the rest of this section we prove Lemma~\ref{thm:recovery-under-nus} and Lemma~\ref{thm:nus-under-incoherence}.

\paragraph{Proof of Lemma~\ref{thm:recovery-under-nus}.}
We first introduce the following result on a useful property of the stable null space property.

\begin{theorem}[Useful property of stable null space property]
Suppose a matrix $\A\in\Re^{m\times n}$ satisfies the null space property with constant $\rho\in(0,1)$ relative to $S\in[n]$. Then for every vector $\x$ supported on $S$, we have
\[
\|\z - \x\|_1 \le \frac{1 + \rho}{1 - \rho}(\|\z\|_1 - \|\x\|_1)
\]
for any $\z$ with $\A\z = \A \x$.
\label{thm:null-space-property}\end{theorem}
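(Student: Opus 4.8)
The plan is to reduce the statement to the null space property applied to the difference vector $\v \doteq \z - \x$, which lies in $\ker\A$ because $\A\z = \A\x$. Since $\x$ is supported on $S$, restricting to the complement gives $\z_{S^c} = \v_{S^c}$, so I would first record the decomposition $\|\z\|_1 = \|\z_S\|_1 + \|\v_{S^c}\|_1$ and then apply the triangle inequality on $S$ in the form $\|\z_S\|_1 \ge \|\x\|_1 - \|\v_S\|_1$. Combining these yields the lower bound
\[
\|\z\|_1 - \|\x\|_1 \;\ge\; \|\v_{S^c}\|_1 - \|\v_S\|_1.
\]

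Next I would invoke the hypothesis that $\A$ satisfies the null space property with constant $\rho$ relative to $S$, i.e. $\|\v_S\|_1 \le \rho\|\v_{S^c}\|_1$ (valid here because $\v \in \ker\A$). Substituting into the bound above gives $\|\z\|_1 - \|\x\|_1 \ge (1-\rho)\|\v_{S^c}\|_1$, hence $\|\v_{S^c}\|_1 \le \tfrac{1}{1-\rho}\big(\|\z\|_1 - \|\x\|_1\big)$. In parallel, the same null space inequality gives $\|\z - \x\|_1 = \|\v\|_1 = \|\v_S\|_1 + \|\v_{S^c}\|_1 \le (1+\rho)\|\v_{S^c}\|_1$. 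Chaining the two estimates produces exactly $\|\z - \x\|_1 \le \tfrac{1+\rho}{1-\rho}\big(\|\z\|_1 - \|\x\|_1\big)$, which is the claim.

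This is a standard compressed-sensing argument, so there is no genuine obstacle; the only point requiring a little care is the sign bookkeeping in the triangle-inequality step — one must make sure to bound $\|\z_S\|_1$ from below (not above) so that the resulting inequality on $\|\z\|_1 - \|\x\|_1$ points in the right direction, and to note in passing that $\|\z\|_1 - \|\x\|_1 \ge 0$ so the final bound is vacuous but still valid when $\v = \0$. No other nontrivial ingredient is needed, and the whole proof is a few lines once the decomposition and the two applications of the NSP inequality are in place.
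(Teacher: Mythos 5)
Your proof is correct and follows essentially the same route as the paper's: both decompose $\|\z\|_1$ over $S$ and $S^c$, lower-bound $\|\z_S\|_1$ by the triangle inequality, apply the null space property to $\z-\x\in\ker\A$, and use $\|\z-\x\|_1\le(1+\rho)\|(\z-\x)_{S^c}\|_1$ to chain the estimates. No gaps; your write-up is in fact cleaner than the paper's (which contains a couple of typographical slips in the displayed inequalities).
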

\begin{proof}[Proof of Theorem~\ref{thm:null-space-property}] Since $\A(\z -\x) = \mb 0$, i.e., $\z-\x\in \ker \A$, the null space property of $\A$ implies
\[
\|(\z-\x)_{S^c}\|_1 \le \rho \|(\z-\x)_{S^c}\|_1,
\]
which further gives that
\[
\|\z - \x\|_1 \le (1+\rho) \|(\z-\x)_{S^c}\|_1.
\]

We now use these properties to prove the main result as
\begin{align*}
    \|\z\|_1 &= \|\z_S\|_1 + \|\z_{S^c}\|_1 = \|(\z -\x +\x)_S\|_1 + \|\z_{S^c}\|_1 \\
    & \ge \|\x\|_1 - \|(\z -\x)_S\|_1  + \|(\z-\x)_{S^c}\|_1\\
    & \ge \|\x\|_1 + (1-\rho) \|(\z-\x)_{S^c}\|_1\\
    & \ge \|\x\|_1 + \frac{1+\rho}{1-\rho}\|\z - \x\|_1,
\end{align*}
where the first inequality follows because $\x$ is only supported on $S$. 
\end{proof}

We are now ready to prove Lemma~\ref{thm:recovery-under-nus}.

\begin{proof}[Proof of Lemma~\ref{thm:recovery-under-nus}]
Let $\mb J = \mb U  \mb \Sigma \mb  V^\top$ be the compact SVD of $\mb J$ and $\mb V_{\perp}$ be an orthonormal basis that complements $\mb V$.
Then, the constraint in \eqref{eq:convex-linear} is equivalent to 
\[
\A\y = \A\s, \quad \mb\theta = \V\mb \Sigma^{-1} \U^\top (\y - \s) + \V_{\perp}\h.
\]

Thus, the problem \eqref{eq:convex-linear} is equivalent to
\begin{equation}\label{eq:convex-linear-v2}
\begin{split}
        &\min_{\s,\h} \frac{1}{2} \|\V\mb \Sigma^{-1} \U^\top (\y - \s) + \V_{\perp}\h\|_2^2 + \lambda \|\s\|_1 \\
        & \text{s.t.} \quad \A\y = \A\s,
   \end{split} \end{equation}
which is further equivalent to 
\begin{equation}\label{eq:convex-linear-v3}
\begin{split}
       & \min_{\s} g(\s):=\frac{1}{2} \|\V\mb \Sigma^{-1} \U^\top (\y - \s) \|_2^2 + \lambda \|\s\|_1 \\
       &\text{s.t.} \quad \A\s_{\star} = \A\s.
\end{split}    \end{equation}

Assume $\A$ satisfies the stable null space property with constant $\rho\in(0,1)$ relative to the support of $\s_\star$. Now for any $\s$ with $\A\s_{\star} = \A\s$, by Theorem~\ref{thm:null-space-property}, we have
\begin{align*}
    \|\s\|_1 - \|\s_\star\|_1 \ge \frac{1-\rho}{1+\rho} \|\s - \s_\star\|_1,
\end{align*}
which ensures $\s = \s_\star$ if we only minimize $\|\s\|_1$. The first term in \eqref{eq:convex-linear-v3} can be written as
\begin{align*}
   \|\V\mb \Sigma^{-1} \U^\top (\y - \s) \|_2^2 & =  \|\V  \mb \Sigma^{-1} \U^\top (\s_\star - \s + \J\mb \theta_\star) \|_2^2,
\end{align*}
where
\begin{align*}
    \mb \theta_\star \;=\; \mb V \mb \Sigma^{-1} \mb U^\top (\mb y - \s_\star ).
\end{align*}
This together with the previous equation gives
\begin{align*}
    &g(\s) - g(\s_\star)\\ &\ge \lambda \frac{1-\rho}{1+\rho} \|\s - \s_\star\|_1 + \|\V\mb\Sigma^{-1} \U^\top (\s_\star - \s + \J \mb\theta_\star) \|_2^2 - \|\V\mb\Sigma^{-1} \U^\top  \J\mb \theta_\star \|_2^2\\
    & = \lambda \frac{1-\rho}{1+\rho} \|\s - \s_\star\|_1 + \|\V\mb \Sigma^{-1} \U^\top (\s_\star - \s) \|_2^2+ 2\langle \s_\star - \s, \U\mb \Sigma^{-1}\V^\top \theta_\star \rangle \\
    & \ge \lambda \frac{1-\rho}{1+\rho} \|\s - \s_\star\|_1 - 2 \|\U\mb \Sigma^{-1}\V^\top \theta_\star\|_{\infty} \|\s - \s_\star\|_1\\
    & = \left( \lambda \frac{1-\rho}{1+\rho} - 2 \|\U\mb\Sigma^{-1}\V^\top \theta_\star\|_{\infty}\right) \|\s - \s_\star\|_1.
\end{align*}
Thus, if $\lambda > \lambda_0$ with
\begin{equation}
    \lambda_0 = 2 \frac{1+\rho}{1 - \rho}  \|\U\mb\Sigma^{-1}\V^\top \mb\theta_\star\|_{\infty},
\end{equation}
we have $g(\s) - g(\s_\star) >0$ whenever $\s\neq \s_\star$.

\end{proof}

\paragraph{Proof of Lemma~\ref{thm:nus-under-incoherence}.}


\begin{proof}
Let $\mb J = \mb U  \mb \Sigma \mb  V^\top$ be the compact SVD of $\mb J$. From \eqref{eq:s-r-mu-rho} we have
\begin{equation}\label{eq:s-r-mu-rho-reformulate}
    (\rho + 1) k \sqrt{\frac{r}{N}\mu(\J)} \le \rho.
\end{equation}
Let $S \subseteq [N]$ with $|S| = k$ and $\a \in \RR^r$ be an arbitrary vector.
We have
\begin{equation}\label{eq:Ualpha_S_1norm}
    \|[\U \a]_S\|_1 = \sum_{i \in S} |\e_i^\top \U \a| = \sum_{i \in S} | \langle \U^\top \e_i, \a \rangle |
    \le k \|\a\|_2 \cdot  \max_{i \in S} \|\U^\top \e_i\|_2 \le k \|\a\|_2 \sqrt{\frac{r}{N} \mu(\J)},
\end{equation}
where the last inequality is obtained from Definition~\ref{def:incoherence}. 
In addition, we have
\begin{equation}\label{eq:Ualpha_1norm}
    \|\U\a\|_1 \ge \|\U\a\|_2 = \|\a\|_2.
\end{equation}
Combining \eqref{eq:s-r-mu-rho-reformulate}, \eqref{eq:Ualpha_S_1norm} and \eqref{eq:Ualpha_1norm}, we get
\begin{equation}
    (\rho + 1)\|[\U \a]_S\|_1 \le (\rho+1) k \|\a\|_2 \sqrt{\frac{r}{N} \mu(\J)}
    \le \rho \|\a\|_2 \le \rho \|\U\a\|_1,
\end{equation}
hence,
\begin{equation}
    \|[\U \a]_S\|_1 \le \rho \|[\U \a]_{S^c}\|_1.
\end{equation}
Noting that $\{\U \a | \a \in \RR^r\} = \ker{\A}$, this finishes the proof by Definition~\ref{def:nus}.
\end{proof}

\end{document}